\documentclass{article}

\usepackage{microtype}
\usepackage{graphicx}
\usepackage{subcaption}
\usepackage{booktabs} 

\usepackage{hyperref}

\usepackage[accepted]{icml2026}

\usepackage{amsmath}
\usepackage{amssymb}
\usepackage{mathtools}
\usepackage{amsthm}
\usepackage{tabularx}
\usepackage{multirow}
\usepackage{xcolor}
\usepackage{colortbl}
\usepackage{longtable}
\usepackage{booktabs}
\usepackage{xcolor}
\usepackage{placeins}

\usepackage[capitalize,noabbrev]{cleveref}

\theoremstyle{plain}
\newtheorem{theorem}{Theorem}[section]
\newtheorem{proposition}[theorem]{Proposition}

\theoremstyle{definition}

\theoremstyle{remark}
\newtheorem{remark}[theorem]{Remark}

\usepackage[textsize=tiny]{todonotes}

\begin{document}

\twocolumn[
  \icmltitle{Ratio-Variance Regularized Policy Optimization}



  \icmlsetsymbol{equal}{*}

  \begin{icmlauthorlist}
    \icmlauthor{Yu Luo}{huawei}
    \icmlauthor{Shuo Han}{huawei}
    \icmlauthor{Yihan Hu}{huawei}
    \icmlauthor{Lei Lv}{tongji}
    \icmlauthor{Huaping Liu}{tsinghua}
    \icmlauthor{Fuchun Sun}{tsinghua}
    \icmlauthor{Jianye Hao}{tianda}
    \icmlauthor{Dong Li}{huawei}
  \end{icmlauthorlist}

  \icmlaffiliation{huawei}{Department of Foundation Model, 2012 Labs, Huawei}
  \icmlaffiliation{tongji}{Shanghai Research Institute for Intelligent Autonomous Systems, Tongji University}
  \icmlaffiliation{tsinghua}{Department of Computer Science and Technology, Tsinghua University}
  \icmlaffiliation{tianda}{College of Intelligence and Computing, Tianjin University}

  \icmlcorrespondingauthor{Yu Luo}{roythu95@gmail.com}
  \icmlcorrespondingauthor{Dong Li}{dongleecsu@gmail.com}
  \icmlcorrespondingauthor{Fuchun Sun}{fcsun@tsinghua.edu.cn}
  \icmlcorrespondingauthor{Huaping Liu}{hpliu@tsinghua.edu.cn}

  \icmlkeywords{Machine Learning, ICML}

  \vskip 0.3in
]



\printAffiliationsAndNotice{}  

\begin{abstract}
Standard on-policy reinforcement learning relies on heuristic clipping to enforce trust regions, but this mechanism imposes a severe cost by indiscriminately truncating high-return yet high-divergence updates. We demonstrate that explicitly constraining the \textit{policy ratio \textbf{variance}} provides a principled local approximation to trust-region constraints, eliminating the need for binary hard clipping. By acting as a distributional ``soft brake'', this approach preserves critical gradient signals from novel discoveries while naturally down-weighting and enabling the reuse of stale, off-policy data. We introduce \textbf{R$^2$VPO} (Ratio-Variance Regularized Policy Optimization), which implements this constraint via a primal–dual optimization framework. Extensive evaluations across $7$ LLM scales, spanning both fast and slow reasoning paradigms, and $10$ robotic control tasks demonstrate the generality of the proposed approach. R$^2$VPO achieves substantial performance gains on mathematical reasoning benchmarks, with particularly pronounced improvements on smaller models, while significantly improving sample efficiency. Furthermore, it consistently outperforms PPO baselines in continuous control domains, particularly in sparse-reward and dynamic environments. Together, these findings establish ratio-variance regularization as a principled foundation for stable and data-efficient policy optimization.
\end{abstract}

\section{Introduction} \label{sec:intro}
On-policy reinforcement learning (RL) has become a cornerstone of modern artificial intelligence, underpinning recent advances ranging from the fine-tuning of large language models (LLMs) for alignment and reasoning~\citep{ouyang2022training,lightman2023let,shao2024deepseekmath} to complex locomotion and manipulation in embodied agents~\citep{liu2025can,lu2025vla}. By directly optimizing policies against reward signals—whether derived from human feedback, verifiable outcomes, or physical interactions—policy gradient methods provide a flexible and scalable framework for transforming raw predictive models into goal-directed behaviors.

\begin{figure}
    \centering
    \includegraphics[width=\linewidth]{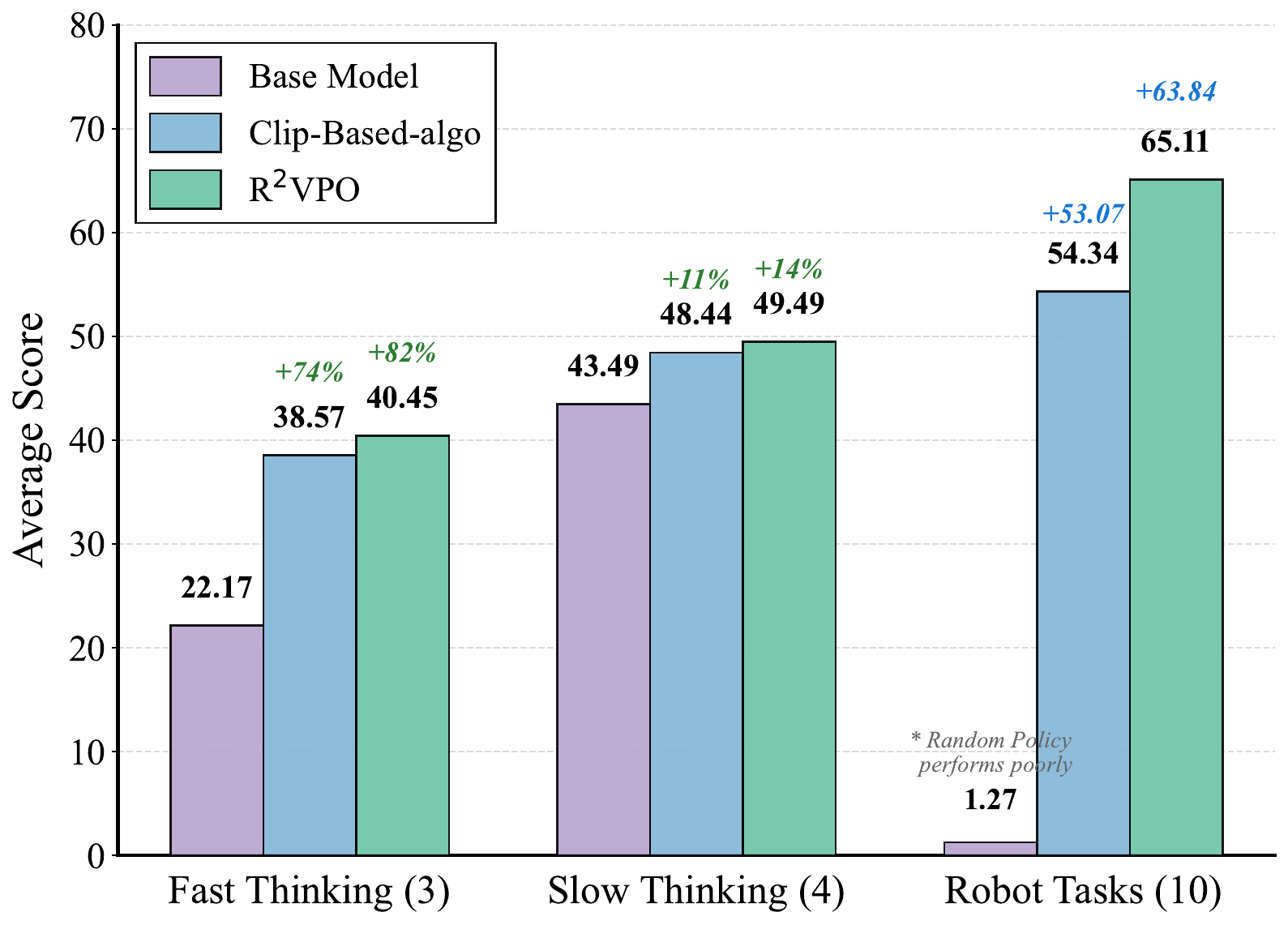}
    \caption{\textbf{Consistent Average Gains.} Results are aggregated over $5$ mathematical reasoning benchmarks across $7$ LLM scales (spanning both Fast and Slow thinking paradigms) and $10$ continuous robotic control tasks. R$^2$VPO consistently achieves the highest average scores, demonstrating its robustness and superiority in both discrete (LLM) and continuous (Robotics) action spaces.}
    \label{fig:comparison_all_results}
    \vspace{-13pt}
\end{figure}

Despite their empirical success, a fundamental challenge persists: ensuring stable and monotonic policy improvement without sacrificing learning efficiency or data utilization. This challenge is traditionally addressed through trust-region methods, most notably Trust Region Policy Optimization (TRPO)~\citep{schulman2015trust}. While recent works have sought to improve TRPO's scalability via Quasi-Newton methods~\citep{jha2020quasi}, Kronecker-factored approximations~\citep{wu2017scalable}, or low-rank updates~\citep{rozada2023matrix}, the computational burden of second-order optimization remains a barrier. Consequently, first-order approximations such as Proximal Policy Optimization (PPO)~\citep{schulman2017proximal} and its variants (e.g., GRPO~\citep{shao2024deepseekmath}) have become widespread. These methods rely on the policy ratio $\rho_t(\theta) = \pi_\theta(a_t|s_t) / \pi_{\text{old}}(a_t|s_t)$ and enforce stability through a heuristic clipping mechanism that truncates $\rho_t$ within a fixed interval.

While clipping has proven effective in practice~\citep{engstrom2020implementation}, it introduces a fundamental limitation: stability is enforced through a pointwise, binary decision. When the policy ratio exceeds a predefined threshold, gradients are abruptly truncated, regardless of the associated return signal. As a result, highly informative samples—such as an LLM discovering a novel reasoning trajectory or a robot executing a previously unseen maneuver—may be suppressed precisely because they induce large policy updates~\citep{xie2024simple}. Moreover, this rigid boundary renders historical data unusable once it becomes slightly stale, forcing on-policy algorithms to discard valid experiences and rely exclusively on freshly collected samples. Although prior work has proposed various heuristics to relax clipping, including asymmetric bounds or gradient-based modifications~\citep{yu2025dapo,su2025klear,roux2025tapered}, these approaches remain fundamentally constrained by the pointwise nature of clipping.

In this work, we argue that this limitation is not inherent to trust-region optimization itself, but rather a consequence of its particular approximation. Revisiting the original trust-region objective, we show that, in the local regime, constraining policy divergence is approximately equivalent to controlling the second-order statistics of the policy ratio—specifically, its variance. This observation reveals variance regularization as a principled, distributional alternative to hard clipping. Unlike pointwise truncation, variance-based constraints softly scale updates according to their divergence, preserving gradient information from high-value outliers while naturally down-weighting stale or off-policy samples. Building on this insight, we propose \textbf{Ratio-Variance Regularized Policy Optimization (R$^2$VPO)}. By formulating variance control as a Lagrangian dual constraint, R$^2$VPO employs a primal--dual optimization scheme that jointly maximizes expected returns and minimizes the variance of importance weights, unifying stable on-policy optimization with efficient off-policy data reuse.

We empirically evaluate R$^2$VPO across diverse domains to verify its generality. As summarized in Figure~\ref{fig:comparison_all_results}, R$^2$VPO demonstrates consistent superiority over strong clipping-based baselines across a wide spectrum of tasks. In the realm of LLM fine-tuning, we conduct extensive experiments on state-of-the-art models across diverse thinking paradigms. R$^2$VPO achieves a remarkable macro-average relative gain of \textbf{+35\%} (up to \textbf{+138\%} on smaller models) and outperforms strong baselines like GRPO in 6 out of 7 settings. Crucially, the off-policy variant (R$^2$VPO-OFF) effectively leverages stale data to improve sample efficiency. Extending our evaluation to continuous control, R$^2$VPO significantly outperforms PPO on DeepMind Control Suite benchmarks~\citep{tassa2018deepmind}, exhibiting superior exploration in sparse-reward settings and preventing performance collapse in dynamic tasks. These findings establish ratio-variance control as a highly promising direction for advancing general reinforcement learning~\footnote{We have released our codebase in \url{https://github.com/Roythuly/R2VPO}}.
\section{Related Work} \label{sec:related_work} 
\paragraph{Approximations of the Trust Region.} 
Trust region methods ensure monotonic policy improvement by constraining the divergence between consecutive policies~\citep{kakade2002approximately,schulman2015trust}, grounded in Natural Gradient descent~\citep{amari1998natural}. To address the cost of Fisher Information Matrix computation, various optimizations have been proposed, ranging from K-FAC approximations (ACKTR)~\citep{wu2017scalable} and Quasi-Newton methods~\citep{jha2020quasi} to efficient low-rank updates~\citep{rozada2023matrix}. Others have focused on theoretical convergence via metric-aware~\citep{song2023provably} or stochastic frameworks~\citep{zhao2019stochastic}, and extended these principles to multiagent domains~\citep{li2023multiagent}. Despite these advances, PPO~\citep{schulman2017proximal} remains dominant due to its simplicity, utilizing heuristic clipping. Recently, methods like SPO~\citep{xie2024simple} and FixPO~\citep{zentner2023guaranteed} have revisited KL-regularization, replacing clipping with direct penalties. Similarly, SB-TRPO~\citep{kanwar2025safety} incorporates safety bias via trust regions. However, these approaches often rely on fixed coefficients or heuristics. R$^2$VPO advances this direction by deriving the \textit{variance of policy ratio} as a local second-order approximation. Unlike heuristic penalties, our variance-based formulation naturally captures optimization curvature and, by solving the Lagrangian dual, adaptively adjusts constraint strength to avoid hyperparameter sensitivity.

Beyond TRPO/PPO-style approximations, several non-clipping trust-region methods have enforced information-theoretic constraints more directly. Relative Entropy Policy Search~\citep{peters2010relative} optimizes policy updates under an explicit KL constraint and solves the resulting dual problem, while subsequent non-parametric policy search methods further study limited information loss under trust-region constraints. V-MPO~\citep{song2019v} formulates policy optimization from a maximum-a-posteriori perspective with KL-constrained E/M-style updates. More recently, differentiable trust-region layers~\citep{otto2021differentiable} impose constraints by projecting neural policy outputs through differentiable layers. These methods demonstrate that clipping is not the only way to realize trust-region control. Our work is complementary: rather than solving an exact KL-constrained update, introducing projection layers, or estimating Fisher-preconditioned directions, R$^2$VPO extracts the shared local second-order geometry of a broad class of f-divergences and implements it as a simple quadratic regularizer directly in importance-ratio space. This makes it particularly easy to integrate into PPO/GRPO-style LLM and replay-based pipelines.

\paragraph{On-Policy Optimization in LLMs and Robotics.}
PPO has become the de facto standard across domains, dominating robotics tasks~\citep{heess2017emergence, hwangbo2019learning, andrychowicz2020learning,zhai2024fine,lu2025vla} and LLM alignment~\citep{ouyang2022training, bai2022constitutional}. Recognizing that standard clipping indiscriminately truncates high-value signals, recent heuristics have emerged: DAPO~\citep{yu2025dapo} and GRPO-CH relax clipping bounds; TOPR~\citep{roux2025tapered} employs asymmetric thresholds; and GPPO~\citep{su2025klear} retains gradient direction. These methods, however, remain ``patches'' on the clipping paradigm. R$^2$VPO fundamentally differs by enforcing a distributional constraint via variance regulation, providing a soft, adaptive regularization for both the discrete and continuous action spaces.

\paragraph{Off-Policy Efficiency and Scalability.} 
In general RL, off-policy algorithms such as SAC~\citep{haarnoja2018soft} and TD3~\citep{fujimoto2018addressing} achieve superior sample efficiency but require separate Critic networks, which poses scalability hurdles for 7B+ models. While works like Off-Policy TRPO~\citep{meng2021off, li2023trust} have introduced monotonic improvement guarantees for off-policy reuse, adapting them to critic-free LLM fine-tuning remains challenging. Alternative approaches using Importance Sampling (IS) corrections (e.g., V-trace~\citep{espeholt2018impala}, Retrace~\citep{munos2016safe}) often suffer from high variance with stale data. R$^2$VPO circumvents these bottlenecks by leveraging ratio variance as a natural stabilizer. By penalizing the variance of importance weights, our framework enables robust off-policy reuse without complex correction terms or extra Value networks, combining the efficiency of methods like Off-Policy TRPO with the scalability of critic-free architectures.
\section{Preliminaries} \label{sec:preliminary}
We consider a standard RL framework modeled as a Markov Decision Process (MDP), defined by the tuple $\mathcal{M} = \langle \mathcal{S}, \mathcal{A}, \mathcal{P}, r, \gamma \rangle$. Here, $\mathcal{S}$ denotes the state space, $\mathcal{A}$ is the action space, and $\mathcal{P}: \mathcal{S} \times \mathcal{A} \rightarrow \Delta(\mathcal{S})$ represents the transition dynamics. At each time step $t$, the agent observes a state $s_t \in \mathcal{S}$, selects an action $a_t \in \mathcal{A}$ according to a stochastic policy $\pi_\theta(a_t|s_t)$ parameterized by $\theta$, receives a scalar reward $r(s_t, a_t)$, and transitions to the next state $s_{t+1} \sim \mathcal{P}(\cdot|s_t, a_t)$. The discount factor $\gamma \in [0, 1)$ balances immediate and future rewards. The standard goal in RL is to find an optimal policy $\pi^*$ that maximizes the expected discounted cumulative return
\begin{equation} 
\mathcal{J}(\pi_\theta) = \mathbb{E}_{\tau \sim \pi_\theta} \left[ \sum_{t=0}^\infty \gamma^t r(s_t, a_t) \right], 
\end{equation}
where $\tau = (s_0, a_0, s_1, a_1, \dots)$ denotes a trajectory sampled under the policy $\pi_\theta$.

Optimizing $\mathcal{J}(\pi_\theta)$ directly via vanilla policy gradients often suffers from instability due to large step sizes. To ensure monotonic improvement, Trust Region Policy Optimization (TRPO)~\citep{schulman2015trust} maximizes a surrogate objective subject to a strict KL-divergence constraint on the policy update, which guarantees policy improvement provided the new policy stays within a ``trust region'' of the old policy
\begin{equation} \label{eq:trpo_constraint}
\begin{aligned}
\max_{\theta}\quad 
& \mathbb{E}_{t} \left[ \rho_t(\theta)\,\hat{A}_t \right] \\
\text{s.t.}\quad 
& \mathbb{E}_{t} \left[ 
D_{\text{KL}}\left(
\pi_{\text{old}}(\cdot|s_t)\,\|\,\pi_\theta(\cdot|s_t)
\right)
\right] \le \delta
\end{aligned}
\end{equation}
where $\rho_t(\theta) = \frac{\pi_\theta(a_t|s_t)}{\pi_{\text{old}}(a_t|s_t)}$ is the probability ratio, $\hat{A}_t$ is the advantage function estimator, and $\delta$ is a divergence limit. Although expressed in terms of KL divergence, the trust-region constraint implicitly enforces that the policy ratio remains close to one over the data distribution.

\begin{table*}[t]
\centering
\caption{\textbf{Quadratic approximation of common $f$-divergences in the trust-region regime} ($\rho_\theta \to 1$).
All approximations follow the second-order expansion:
$D_f(\pi_\theta \| \pi_{\text{off}})
\approx \frac{f''(1)}{2}\,
\mathrm{Var}_{\pi_{\text{off}}}[\rho_\theta]$.}
\label{tab:divergence_quadratic}
\renewcommand{\arraystretch}{1.3}
\begin{tabular}{llcl}
\toprule
\textbf{Divergence} & \textbf{Generator} $f(u)$ & $f''(1)$ & \textbf{Variance Approximation} \\
\midrule
Reverse KL 
& $u \log u$ 
& $1$ 
& $D_{\mathrm{RKL}}(\pi_\theta \| \pi_{\text{off}}) \approx \frac{1}{2}\mathrm{Var}[\rho_\theta]$ \\
Forward KL 
& $-\log u$ 
& $1$ 
& $D_{\mathrm{FKL}}(\pi_\theta \| \pi_{\text{off}}) \approx \frac{1}{2}\mathrm{Var}[\rho_\theta]$ \\
Jensen-Shannon 
& $\frac{u}{2}\log u - \frac{u+1}{2}\log \frac{u+1}{2}$ 
& $1/4$ 
& $D_{\mathrm{JS}}(\pi_\theta \| \pi_{\text{off}}) \approx \frac{1}{8}\mathrm{Var}[\rho_\theta]$ \\
\addlinespace[0.5ex]
Hellinger 
& $(\sqrt{u}-1)^2$ 
& $1/2$ 
& $D_{\mathrm{H}}(\pi_\theta \| \pi_{\text{off}}) \approx \frac{1}{4}\mathrm{Var}[\rho_\theta]$ \\
$\chi^2$-divergence 
& $(u-1)^2$ 
& $2$ 
& $D_{\chi^2}(\pi_\theta \| \pi_{\text{off}}) \approx \mathrm{Var}[\rho_\theta]$ \\
$\alpha$-divergence ($\alpha=0.5$) 
& $4(1-\sqrt{u})$ 
& $1$ 
& $D_\alpha(\pi_\theta \| \pi_{\text{off}}) \approx \frac{1}{2}\mathrm{Var}[\rho_\theta]$ \\
\bottomrule
\end{tabular}
\end{table*}

Due to the high computational cost of enforcing Eq.~\eqref{eq:trpo_constraint} via second-order optimization, first-order methods such as Proximal Policy Optimization (PPO)~\citep{schulman2017proximal} and Group Relative Policy Optimization (GRPO)~\citep{shao2024deepseekmath} have become standard. These algorithms replace the explicit trust-region constraint with a heuristic clipping objective:
\begin{equation} \label{eq:clip_obj} 
\begin{aligned} 
\mathcal{L}^{\text{CLIP}}(\theta) = \mathbb{E}_{t} \Big[ & \min \Big( \rho_t(\theta)\,\hat{A}_t, \\ & \text{clip}\!\left(\rho_t(\theta), 1-\epsilon, 1+\epsilon\right)\,\hat{A}_t \Big) \Big]. 
\end{aligned} 
\end{equation}
Clipping enforces stability by imposing a fixed, pointwise bound on the policy ratio. While effective at preventing destructive updates, this binary mechanism tightly couples stability to the hard truncation of $\rho_t(\theta)$. As a result, it indiscriminately discards gradient information from samples with high divergence or staleness, limiting both learning efficiency and the reuse of off-policy data. This limitation motivates the need for a more principled regularization that controls divergence broadly without erasing valuable high-leverage signals.
\section{Method}
\label{sec:method}
In this section, we revisit trust-region policy optimization from the perspective of the \emph{policy ratio}. 
We show that, across a broad class of trust-region constraints, the local geometry of policy divergence is universally governed by the variance of the policy ratio.
This insight leads to a simple yet principled alternative to heuristic clipping: a variance-regularized objective that enables stable updates and naturally supports off-policy data reuse.

\begin{figure*}[t]
\centering
\includegraphics[width=\linewidth]{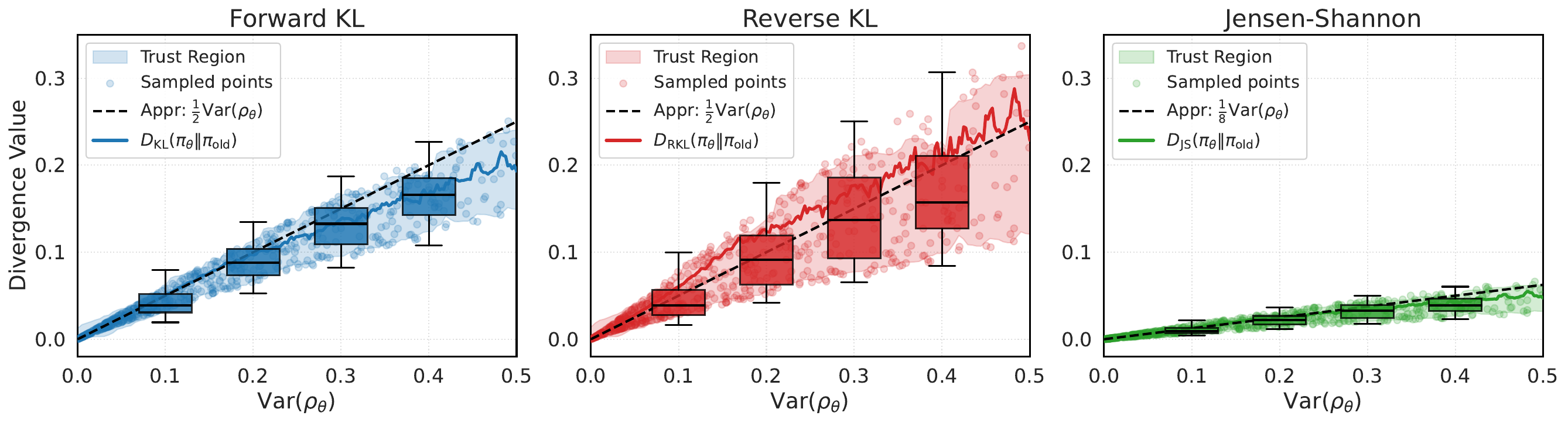}
\caption{\textbf{Ratio-variance as a unified proxy for $f$-divergence trust regions}. Theoretical quadratic approximations (dashed lines) align tightly with exact numerical values (solid lines) across Reverse KL, Forward KL, and JS metrics. Boxplots and shading represent 80\% confidence intervals from sampled Gaussian policies, confirming that ratio variance provides a stable, computationally tractable alternative to complex divergence constraints.}
\label{fig:three_visualization}
\end{figure*}
\subsection{From $f$-Divergence to Policy Ratio Variance}
\label{subsec:f_div_to_var}
To establish a generalized framework, we revisit the trust region constraint. While TRPO specifically employs the Kullback-Leibler (KL) divergence, we consider the broader family of $f$-divergences~\citep{sason2016f}, denoted as $D_f(\cdot\|\cdot)$. Let $\pi_{\text{off}}$ be the behavior policy (or the policy at the previous iteration) and $\pi_\theta$ be the current policy. The trust region constraint can be formulated as
\begin{equation}
\label{eq:f_constraint}
\mathbb{E}_{s \sim d^{\pi_{\text{off}}}} \left[ D_f(\pi_\theta(\cdot|s) \| \pi_{\text{off}}(\cdot|s)) \right] \leq \delta,
\end{equation}
where the divergence is defined as $D_f(P\|Q) = \mathbb{E}_{x\sim Q}[f(P(x)/Q(x))]$ and $f: (0, \infty) \to \mathbb{R}$ is a convex function with $f(1)=0$.
While valid for any convex $f$, directly optimizing this constraint is often intractable. Here, we demonstrate that for any valid $f$-divergence, the local geometry of the trust region is dominated by the variance of the policy ratio. 
\begin{proposition}[Variance Approximation of $f$-Divergence]
\label{prop:variance_approx}
Let $\pi_\theta$ be a policy in the neighborhood of a reference policy 
$\pi_{\mathrm{off}}$, and define the policy ratio
$\rho_\theta(a|s)=\pi_\theta(a|s)/\pi_{\mathrm{off}}(a|s)$.
Assume $f$ is twice continuously differentiable at $\rho=1$ with $f(1)=0$.
Then, for sufficiently small policy updates (i.e., $\rho_\theta \to 1$),
the expected $f$-divergence admits the second-order approximation
\begin{equation}
\begin{aligned}
\mathbb{E}_{\pi_{\mathrm{off}}}\!\left[D_f(\pi_\theta \| \pi_{\mathrm{off}})\right]
&= \frac{f''(1)}{2}\,\mathrm{Var}_{\pi_{\mathrm{off}}}[\rho_\theta] \\
&\quad + \mathcal{O}\!\left(\mathbb{E}_{\pi_{\mathrm{off}}}[|\rho_\theta-1|^3]\right).
\end{aligned}
\end{equation}
where the variance is taken over $(s,a)\sim \pi_\mathrm{off}$. 
\end{proposition}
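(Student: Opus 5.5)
The plan is a pointwise Taylor expansion of $f$ around $1$, followed by two exact identities for the ratio that remove the zeroth- and first-order terms. Fix a state $s$. The definition gives
\[
D_f(\pi_\theta(\cdot|s)\|\pi_{\mathrm{off}}(\cdot|s))=\mathbb{E}_{a\sim\pi_{\mathrm{off}}(\cdot|s)}[f(\rho_\theta(a|s))].
\]
Since $f$ is $C^2$ near $1$, Taylor's theorem with remainder gives, for $\rho$ in a neighborhood of $1$,
\[
f(\rho)=f(1)+f'(1)(\rho-1)+\tfrac{f''(1)}{2}(\rho-1)^2+R(\rho),
\]
with $f(1)=0$ by assumption. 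To get the stated cubic remainder I will strengthen the hypothesis slightly: assume $f''$ is locally Lipschitz at $1$, or that $f\in C^3$ near $1$. Under that assumption, $|R(\rho)|\le C|\rho-1|^3$ on the neighborhood. With mere continuity of $f''$ one only gets $o((\rho-1)^2)$. The regime $\rho_\theta\to1$ I will read as the ratio lying uniformly in this neighborhood, i.e. $\sup_{s,a}|\rho_\theta-1|$ small.

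\textbf{Step 1 (kill the first-order term).} I use the exact identity
\[
\mathbb{E}_{a\sim\pi_{\mathrm{off}}}[\rho_\theta]=\sum_a\pi_\theta(a|s)=1,
\]
or the corresponding integral in the continuous case. Taking expectations in the Taylor expansion then removes the linear term $f'(1)\,\mathbb{E}[\rho_\theta-1]$, which vanishes for every state. This is the key structural fact that makes the expansion independent of $f'(1)$.

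\textbf{Step 2 (identify the quadratic term).} Averaging over $s\sim d^{\pi_{\mathrm{off}}}$, the quadratic term becomes $\tfrac{f''(1)}{2}\mathbb{E}_{(s,a)}[(\rho_\theta-1)^2]$. Because $\mathbb{E}_{(s,a)}[\rho_\theta]=1$ (the tower rule applied to Step 1), this second moment is exactly $\mathrm{Var}_{\pi_{\mathrm{off}}}[\rho_\theta]$ over the joint distribution of $(s,a)$. The remainder satisfies $|\mathbb{E}[R]|\le C\,\mathbb{E}[|\rho_\theta-1|^3]$, which is the claimed error term.

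\textbf{Main obstacle.} The delicate step is justifying a uniform remainder bound. If the ratio is not uniformly close to $1$, for instance because of unbounded tails for Gaussian policies, then $R$ need not be controlled by the cubic moment. Functions such as $f(u)=-\log u$ blow up near $0$, so an integrable domination fails without extra assumptions. I would handle this by stating the locality assumption explicitly as $\|\rho_\theta-1\|_\infty\le\epsilon$, with $C=\tfrac16\sup_{|u-1|\le\epsilon}|f'''(u)|$ or the corresponding Lipschitz constant of $f''$. Then the argument is purely pointwise and integrates cleanly.

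The entries of Table~\ref{tab:divergence_quadratic} then follow by computing $f''(1)$ for each generator. For example, $f(u)=u\log u$ gives $f''(1)=1$, and $f(u)=(u-1)^2$ gives $f''(1)=2$.
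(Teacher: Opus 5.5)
Your proposal is correct and follows essentially the same route as the paper's proof. Both use a pointwise second-order Taylor expansion of $f$ at $1$, with $\mathbb{E}_{\pi_{\mathrm{off}}}[\rho_\theta]=1$ used twice: to cancel the $f'(1)$ term and to identify $\mathbb{E}[(\rho_\theta-1)^2]$ with $\mathrm{Var}_{\pi_{\mathrm{off}}}[\rho_\theta]$. Your added assumptions (a locally Lipschitz $f''$ or $f\in C^3$ near $1$, and a uniform bound $\sup_{s,a}|\rho_\theta-1|\le\epsilon$) are what the paper's proof uses tacitly when it writes the remainder as $\mathcal{O}(|\rho-1|^3)$ and integrates it. They are genuinely needed: with $f$ merely $C^2$ the remainder is only $o(\mathbb{E}[(\rho_\theta-1)^2])$, and for example $f(u)=(u-1)^2+|u-1|^{5/2}$ breaks the cubic rate.
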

The proof is provided in Appendix~\ref{prop:variance_approx_appx}. This high-order term indicates that the approximation is accurate in a small neighborhood around $\rho_\theta \approx 1$. Thus, ratio variance constitutes the fundamental second-order surrogate underlying a wide class of trust-region constraints.

\begin{remark}
While Proposition~\ref{prop:variance_approx} is derived as a local approximation around $\rho \approx 1$, variance regularization remains a conservative stabilizer even beyond this regime, as it directly penalizes the second moment of importance weights. Unlike hard clipping, which introduces a discontinuous truncation, the quadratic penalty grows smoothly and provides meaningful gradients for large deviations.
\end{remark}

As summarized in Table~\ref{tab:divergence_quadratic}, this property holds uniformly across the $f$-divergence family.
In the local trust-region regime, common divergences—including Forward/Reverse KL, Jensen--Shannon, Hellinger, and $\chi^2$—all collapse to a scaled ratio-variance term $\frac{f''(1)}{2}\mathrm{Var}[\rho_\theta]$.
Figure~\ref{fig:three_visualization} empirically validates this approximation, showing tight alignment between exact divergences and their quadratic surrogates across representative metrics.
Additional visualizations are provided in Appendix~\ref{sec:variance_appro_divergence}.

\subsection{Primal-Dual Formulation}
Leveraging the approximation established in Prop.~\ref{prop:variance_approx}, we recast the intractable $f$-divergence constrained problem into a tractable variance-constrained optimization. By substituting the variance term $\mathbb{E}[(\rho_t(\theta)-1)^2]$ for the divergence constraint, the optimization objective becomes
\begin{equation}
\label{eq:var_problem}
\begin{aligned}
\max_{\theta} \quad
& \mathbb{E}_{(s_t,a_t)\sim\pi_{\text{off}}}
\!\left[\rho_t(\theta)\,\hat{A}_t\right] \\
\text{s.t.} \quad
& \mathbb{E}_{(s_t,a_t)\sim\pi_{\text{off}}}
\!\left[(\rho_t(\theta)-1)^2\right] \leq \delta .
\end{aligned}
\end{equation}
Solving this constrained problem directly is challenging due to the hard boundary on the variance. To address this, we employ the method of Lagrange multipliers to relax the constraint into a penalty term, transforming the problem into an unconstrained saddle-point optimization.

\begin{theorem}[R$^2$VPO Primal-Dual Objective]\label{thm:obj_rvpo}
The constrained optimization problem in Eq.~\eqref{eq:var_problem} can be written as the following Lagrangian min--max problem:
\begin{equation}
\label{eq:primal_dual}
\min_{\lambda \ge 0} \max_{\theta} \ \mathcal{L}(\theta, \lambda) = \mathbb{E}_{\pi_{\text{off}}} \left[ \rho_t(\theta)\hat{A}_t - \lambda \left( (\rho_t(\theta)-1)^2 - \delta \right) \right],
\end{equation}
where $\lambda$ is the dual variable (Lagrange multiplier) dynamically controlling the strength of the variance regularization, and $\delta$ represents the target tolerance level.
\end{theorem}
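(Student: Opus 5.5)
The claim is the standard Lagrangian reformulation of an inequality-constrained program, so I would prove it directly from the definition of the Lagrangian. Write $J(\theta)=\mathbb{E}_{\pi_{\text{off}}}[\rho_t(\theta)\hat{A}_t]$ and $g(\theta)=\mathbb{E}_{\pi_{\text{off}}}[(\rho_t(\theta)-1)^2]-\delta$. Then Eq.~\eqref{eq:var_problem} reads $\max_\theta J(\theta)$ subject to $g(\theta)\le 0$. By linearity of expectation (the constant $\delta$ passes inside since $\mathbb{E}[\delta]=\delta$), the function in Eq.~\eqref{eq:primal_dual} is exactly $\mathcal{L}(\theta,\lambda)=J(\theta)-\lambda g(\theta)$, the standard Lagrangian.

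The key step is the pointwise identity
\[
\inf_{\lambda\ge 0}\mathcal{L}(\theta,\lambda)=\begin{cases}J(\theta), & g(\theta)\le 0,\\ -\infty, & g(\theta)>0.\end{cases}
\]
If $g(\theta)\le 0$, the term $-\lambda g(\theta)$ is nonnegative and minimized at $\lambda=0$. If $g(\theta)>0$, letting $\lambda\to\infty$ drives $\mathcal{L}\to-\infty$. Hence $\max_\theta\inf_{\lambda\ge0}\mathcal{L}(\theta,\lambda)$ equals the optimal value of Eq.~\eqref{eq:var_problem}, with the same maximizers whenever the problem is feasible. Feasibility is automatic for $\delta\ge0$, since $\theta=\theta_{\text{off}}$ gives $\rho_t\equiv1$. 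This gives the exact max--min Lagrangian form.

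The main obstacle is the order of the operators: the theorem writes $\min_\lambda\max_\theta$, the dual problem, and weak duality only gives $\min_\lambda\max_\theta\mathcal{L}\ge\max_\theta\min_\lambda\mathcal{L}$. Equality requires a strong-duality argument. I would first verify this in ratio space: if the decision variables are the ratios $\rho_t$ themselves (a nonparametric, sufficiently expressive policy class), then $J$ is linear in $\rho$ and $g$ is convex in $\rho$. Slater's condition holds at $\rho\equiv1$ whenever $\delta>0$, because $g=-\delta<0$ there. Convex duality (or Sion's minimax theorem, using concavity in $\rho$, linearity in $\lambda$, and compactness of the constraint sublevel set) then gives zero duality gap and the existence of an optimal multiplier $\lambda^\star\ge0$. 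In that case $\mathcal{L}(\cdot,\lambda^\star)$ has the constrained optimum among its maximizers, and complementary slackness $\lambda^\star g=0$ holds.

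For a general nonconvex parameterization $\theta$, I would state the result as a Lagrangian relaxation. The $\max$--$\min$ form is exact, and the $\min$--$\max$ form is its dual upper bound, which is tight under the convexity and Slater conditions above. This justifies interpreting $\lambda$ as an adaptive regularization strength updated by dual descent on $g$.
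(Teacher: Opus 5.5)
Your argument is correct, and it is more careful than the paper's. The paper gives no separate proof of this theorem. It only says the constraint is relaxed ``via the method of Lagrange multipliers'' into a penalty, treating the statement as a definitional reformulation and never distinguishing the two operator orders.

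You start from the same Lagrangian $\mathcal{L}=J-\lambda g$ but then pin down in what sense ``can be written as'' is true:
\begin{itemize}
\item It holds exactly in the $\max_\theta\min_{\lambda\ge0}$ order, by your pointwise identity.
\item It holds in the stated $\min_{\lambda\ge0}\max_\theta$ order only under strong duality. You establish that in ratio space: the objective is linear, the constraint is convex, and $\rho\equiv1$ is a Slater point when $\delta>0$.
\item For a nonconvex neural parameterization, the min--max is only a weak-duality upper bound.
\end{itemize}
That is the defensible reading of the theorem. It also explains why the paper's projected update of $\lambda$ is dual descent: $\delta-\mathbb{E}[(\rho_t-1)^2]$ is a (sub)gradient of $\lambda\mapsto\max_\theta\mathcal{L}(\theta,\lambda)$. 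The paper's one-line treatment buys only the algorithmic objective; your version buys an honest statement of when the saddle-point form equals the constrained problem.

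Two small repairs are needed.

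First, in ratio space, keep $\rho\ge0$ and $\mathbb{E}_{\pi_{\text{off}}}[\rho\mid s]=1$ as explicit domain constraints, so the decision variables are ratios of actual policies. These constraints are affine, so convexity is preserved and $\rho\equiv1$ remains a Slater point.

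Second, drop the appeal to Sion via ``compactness of the constraint sublevel set.'' Restricting the inner maximization to $\{g\le0\}$ makes the min--max identity trivial, and it no longer addresses the unrestricted problem in the theorem. If you want Sion, use compactness of the primal domain itself, for example the ratio polytope when the action space is finite. Otherwise, Lagrange duality for a single convex inequality constraint under Slater's condition already gives a zero duality gap and an attained multiplier $\lambda^\star$, since the primal value is finite by the variance constraint and Cauchy--Schwarz.
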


To understand the mechanism of this objective, we analyze the gradient of the Lagrangian $\mathcal{L}$ with respect to the policy
\begin{align}
\label{eq:gradient}
\nabla_\theta \mathcal{L}(\theta,\lambda)
&=
\mathbb{E}_{(s_t,a_t)\sim\pi_{\mathrm{off}}}
\Big[
\underbrace{
\big(
\hat{A}_t - 2\lambda(\rho_t(\theta)-1)
\big)
}_{\text{Regularized Advantage}}
\,\rho_t(\theta) \nonumber \\
&\qquad\qquad\qquad
\times \nabla_\theta \log \pi_\theta(a_t \mid s_t)
\Big].
\end{align}

The gradient in Eq.~\eqref{eq:gradient} reveals the core mechanism of R$^2$VPO. The term $2\lambda(\rho_t(\theta)-1)$ acts as a dynamic, instance-dependent regularizer that modulates the original advantage $\hat{A}_t$. Unlike hard clipping, which zeroes out the gradient when the ratio exceeds a threshold $\nabla \mathcal{L}=0$, this formulation preserves the gradient direction but scales its magnitude:
\begin{itemize}
    \item \textbf{Near-Identity Updates:} When the policy ratio is close to $1$ (i.e., $\pi_\theta \approx \pi_{\text{off}}$), the penalty term vanishes, and the update is driven purely by the task advantage $\hat{A}_t$.
    \item \textbf{High-Divergence Updates:} When the policy deviates significantly, the penalty term grows linearly with the deviation. This induces a \emph{continuous attenuation} of high-divergence updates, in contrast to the discontinuous truncation imposed by clipping.
\end{itemize}

To rigorously justify substituting the established hard clipping mechanism with our variance-based control, we derive a bound relating the two approaches. We show that the ``clipping error''—the divergence between the true off-policy objective and the heuristically clipped surrogate—is strictly upper-bounded by the policy ratio variance.

\begin{algorithm*}[t]
\caption{Training Frameworks for R$^2$VPO}
\label{alg:r2vpo_combined}
\begin{small}
\begin{minipage}[t]{0.49\textwidth}
    \textbf{Algorithm 1} R$^2$VPO-ON (On-Policy)
    \hrule height 0.8pt \vspace{2pt}
    \begin{algorithmic}[1]
        \REQUIRE Initial policy $\pi_\theta$, constraint $\delta$, learning rate $\eta_\lambda$
        \STATE Initialize dual variable $\lambda \gets 0.04$
        \REPEAT
            \STATE \textcolor{gray}{\textit{// 1. Online Data Collection}}
            \STATE Collect trajectories $\tau \sim \pi_{\theta}$
            \STATE Compute rewards $r$ by verifier or reward model
            \STATE Compute advantages $\hat{A}$ by GAE or relative group
            
            \STATE \textcolor{gray}{\textit{// 2. Primal-Dual Optimization}}
            \FOR{epoch $k = 1, \dots, K$}
                \STATE Compute ratio $\rho_t$
                \STATE Compute loss $\mathcal{L}(\theta, \lambda)$ per Eq.~\eqref{eq:primal_dual}
                \STATE Update primal: $\theta \gets \theta + \alpha \nabla_\theta \mathcal{L}$
                
                \STATE \textcolor{gray}{\textit{// (Optional) Dual Update}}
                \STATE Update $\lambda$ via dual descent using Eq.~\eqref{eq:lambda_update}
            \ENDFOR
        \UNTIL{max environment steps}
    \end{algorithmic}
\end{minipage}
\hfill 
\begin{minipage}[t]{0.49\textwidth}
    \textbf{Algorithm 2} R$^2$VPO-OFF (Off-Policy)
    \hrule height 0.8pt \vspace{2pt}
    \begin{algorithmic}[1]
        \REQUIRE Policy $\pi_\theta$, Replay Buffer $\mathcal{D}$, constraint $\delta$, rate $\eta_\lambda$
        \STATE Initialize dual variable $\lambda \gets 0.04$
        \REPEAT
            \STATE \textcolor{gray}{\textit{// 1. Inference \& Data Storage}}
            \FOR{each query batch $x$}
                \STATE Sample $y \sim \pi_\theta(\cdot|x)$, compute $r, \hat{A}$
                \STATE Store tuple $\langle x, y, r, \hat{A}, \log \pi_{\text{off}} \rangle$ in $\mathcal{D}$
            \ENDFOR
            
            \STATE \textcolor{gray}{\textit{// 2. Primal-Dual Optimization}}
            \FOR{gradient step $j = 1, \dots, M$}
                \STATE Sample batch $b \sim \mathcal{D}$
                \STATE Compute ratio $\rho_t$ (using stored $\log\pi_{\text{off}}$)
                \STATE Compute loss $\mathcal{L}(\theta, \lambda)$ on batch $b$ per Eq.~\eqref{eq:primal_dual}
                \STATE Update primal: $\theta \gets \theta + \alpha \nabla_\theta \mathcal{L}$
                
                \STATE \textcolor{gray}{\textit{// (Optional) Dual Update}}
                \STATE Update $\lambda$ via dual descent using Eq.~\eqref{eq:lambda_update}
            \ENDFOR
        \UNTIL{max environment steps}
    \end{algorithmic}
\end{minipage}
\end{small}
\end{algorithm*}

\begin{theorem}[Variance Bound on Clipping Error]
\label{thm:clip_bound}
Let $\mathcal{J}^{\mathrm{UNC}}(\theta)= \mathbb{E}_{(s_t,a_t)\sim\pi_{\mathrm{off}}}
[\rho_t(\theta)\hat{A}_t]$ be the unclipped off-policy objective, and $\mathcal{J}^{\mathrm{CLIP}}(\theta)$ the clipped objective with clip range $\epsilon$.
Assume that the support of $\pi_\theta$ is contained in that of $\pi_{\mathrm{off}}$, and that $\mathbb{E}_{\pi_{\mathrm{off}}}[\rho_t^2] < \infty$. Then,
\begin{equation}
\left|\mathcal{J}^{\mathrm{UNC}}(\theta)-\mathcal{J}^{\mathrm{CLIP}}(\theta)\right|
\;\le\;\frac{A_{\max}}{\epsilon}\,\mathrm{Var}_{\pi_{\mathrm{off}}}
\!\left[\rho_t(\theta)\right].
\end{equation}
\end{theorem}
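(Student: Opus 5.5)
The plan is to compare the two objectives sample by sample and then apply a Chebyshev-type (Markov on squared deviation) argument. Throughout, $\mathcal{J}^{\mathrm{CLIP}}$ is the expectation of Eq.~\eqref{eq:clip_obj} under $\pi_{\mathrm{off}}$, and $A_{\max}=\sup_t|\hat A_t|$. We also use that $\mathbb{E}_{\pi_{\mathrm{off}}}[\rho_t]=1$. This identity follows from the support assumption: summing $\pi_\theta$ over the support of $\pi_{\mathrm{off}}$ gives one. As a consequence, $\mathrm{Var}_{\pi_{\mathrm{off}}}[\rho_t]=\mathbb{E}_{\pi_{\mathrm{off}}}[(\rho_t-1)^2]$, which is finite by the second-moment assumption.

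\textbf{Step 1 (pointwise error).} Define
\[
\Delta_t=\rho_t\hat A_t-\min\bigl(\rho_t\hat A_t,\ \mathrm{clip}(\rho_t,1-\epsilon,1+\epsilon)\hat A_t\bigr).
\]
If $|\rho_t-1|\le\epsilon$, the clip is inactive and $\Delta_t=0$. Otherwise, $\Delta_t$ is either $0$ or $(\rho_t-\mathrm{clip}(\rho_t))\hat A_t$, depending on which branch the $\min$ selects. In either case,
\[
|\Delta_t|\le|\hat A_t|\,\bigl(|\rho_t-1|-\epsilon\bigr)_+\,\mathbf 1\{|\rho_t-1|>\epsilon\}\le A_{\max}\,|\rho_t-1|\,\mathbf 1\{|\rho_t-1|>\epsilon\}.
\]

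\textbf{Step 2 (quadratic domination).} On the event $|\rho_t-1|>\epsilon$ we have $|\rho_t-1|<|\rho_t-1|^2/\epsilon$. Hence
\[
|\Delta_t|\le\frac{A_{\max}}{\epsilon}(\rho_t-1)^2
\]
holds for every sample, and trivially so when the clip is inactive.

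\textbf{Step 3 (integrate).} By Jensen, $|\mathcal J^{\mathrm{UNC}}-\mathcal J^{\mathrm{CLIP}}|\le\mathbb{E}_{\pi_{\mathrm{off}}}|\Delta_t|$. Applying Step 2 bounds this by $\frac{A_{\max}}{\epsilon}\mathbb{E}[(\rho_t-1)^2]$, which equals $\frac{A_{\max}}{\epsilon}\mathrm{Var}_{\pi_{\mathrm{off}}}[\rho_t]$ by the mean-one identity above. The finiteness of the second moment ensures that all expectations exist.

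\textbf{Main obstacle: the mean-one identity.} The delicate step is $\mathbb E_{\pi_{\mathrm{off}}}[\rho_t]=1$, which is what lets us replace $\mathbb{E}[(\rho-1)^2]$ by the variance. In the state–action setting, states are drawn from $d^{\pi_{\mathrm{off}}}$ and the identity holds conditionally on each state, hence also marginally. This is exactly where the support assumption enters.

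\textbf{If the variance form fails.} If the identity is not available, I would state the bound with the second moment $\mathbb{E}[(\rho-1)^2]$. This quantity is the constrained quantity in Eq.~\eqref{eq:var_problem} anyway.

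A sharper constant is also available but not needed. Using $(x-\epsilon)_+\le x^2/(4\epsilon)$ in Step 1 would improve the bound by a factor of $4$.
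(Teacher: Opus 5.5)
Your proof is correct and follows essentially the same route as the paper: a pointwise bound $|\hat A_t|\,|\rho_t-\mathrm{clip}(\rho_t,1-\epsilon,1+\epsilon)|\le A_{\max}(|\rho_t-1|-\epsilon)_+$, then the domination $|x|\le x^2/\epsilon$ on $\{|x|>\epsilon\}$, then integration, and finally the mean-one identity to turn $\mathbb{E}_{\pi_{\mathrm{off}}}[(\rho_t-1)^2]$ into the variance. You are slightly more careful than the paper on two points it leaves implicit: you handle the $\min$ in the PPO surrogate explicitly, and you show where the support assumption gives $\mathbb{E}_{\pi_{\mathrm{off}}}[\rho_t]=1$.
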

The proof is provided in Appendix~\ref{thm:clip_bound_appr}. Theorem~\ref{thm:clip_bound} offers a formal justification for R$^2$VPO. By explicitly minimizing the ratio variance (via the penalty $\lambda$), R$^2$VPO naturally minimizes the potential overestimation error that clipping was heuristically designed to prevent. Unlike hard clipping, which completely discards signals outside the $\epsilon$-bound, variance control softly penalizes them proportional to their squared deviation, maintaining gradient flow while reducing the clipping-induced error controlled by the ratio variance.

R$^2$VPO should be interpreted as a local trust-region surrogate rather than an exact trust-region method with global monotonic improvement guarantees. Unlike TRPO, which enforces an explicit KL constraint and derives a monotonic improvement bound under idealized assumptions, our method replaces the exact divergence by its local second-order ratio-variance approximation. Therefore, the guarantee provided by Proposition~\ref{prop:variance_approx} is local: when the policy ratio remains close to one, the variance penalty captures the leading-order geometry of the trust region. In exchange for giving up exact global monotonicity, R$^2$VPO obtains a simple first-order objective that operates directly in the ratio space used by PPO/GRPO-style training and naturally supports stale-data reuse.

\subsection{Implementation Instances}
The theoretical framework developed in Theorem~\ref{thm:obj_rvpo} directly gives rise to \textbf{R$^2$VPO}, a practical algorithm that replaces heuristic clipping with a principled variance-based regularization. Concretely, the clip operator and its associated thresholds are substituted by a simple quadratic penalty $(\rho_t(\theta)-1)^2$, yielding a clean objective that is fully aligned with the primal--dual formulation in Eq.~\eqref{eq:primal_dual}. This design not only simplifies implementation but also provides a unified interface for both on-policy and off-policy training, as summarized in Algorithm~\ref{alg:r2vpo_combined}.

\paragraph{On-Policy Training (R$^2$VPO-ON).}
In the standard on-policy setting, R$^2$VPO mirrors the workflow of PPO/GRPO but employs the Lagrangian objective. A flexibility of our framework lies in the option of the dual variable $\lambda$, which governs the regularization strength. It can be configured in two modes depending on stability requirements:
\begin{enumerate}
\item \textbf{Fixed Regularization:} $\lambda$ is treated as a static hyperparameter (e.g., $\lambda=0.04$) to simplify implementation.
\item \textbf{Adaptive Lagrangian Tuning:} $\lambda$ is dynamically optimized via dual gradient descent to strictly enforce the trust region constraint $\delta$.
\end{enumerate}
In the adaptive setting, $\lambda$ is updated at each step via 
\begin{equation} \label{eq:lambda_update}
\lambda \leftarrow \max(0, \lambda - \eta_\lambda (\delta - \mathbb{E}[(\rho_t-1)^2])).
\end{equation}
This automatic tuning ensures the policy remains within the trust region without manual hyperparameter searching. 

\paragraph{Off-Policy Training (R$^2$VPO-OFF).}
Crucially, the variance-based formulation extends naturally to off-policy learning by robustly tolerating data staleness. Unlike standard on-policy methods that must discard data immediately after optimization, R$^2$VPO-OFF leverages a \textit{Replay Buffer} $\mathcal{D}$ to reuse historical reasoning trails. During the inference phase, we collect experience tuples
$\tau = \langle q, o, \log \pi_{\text{off}}(o \mid q), r, \hat{A} \rangle$
and store them in $\mathcal{D}$, explicitly recording the behavior policy $\pi_{\text{off}}$ at generation time. And in the training phase, we sample mini-batches uniformly from $\mathcal{D}$. The variance penalty $(\rho_t-1)^2$ automatically scales the gradient contribution: fresh data contributes fully, while stale data is softly penalized via the ``Regularized Advantage'' mechanism, preventing instability. Similar to the on-policy variant, the regularization coefficient $\lambda$ can be either fixed or adaptively tuned. Beyond sample efficiency, this off-policy capability facilitates a decoupled, asynchronous architecture ideal for LLM post-training. Since generation (inference) and optimization (training) often have mismatched throughputs, R$^2$VPO-OFF allows Actor nodes to focus exclusively on high-throughput generation while Learner nodes asynchronously update the model using buffered data. The updated weights are periodically synchronized to Actors, maximizing overall system efficiency.
\begin{table*}[t]
\centering
\caption{\textbf{Streamlined Performance Comparison.} By merging the accuracy score and relative gain into a single cell, we provide a clearer view of performance across all seven model scales. Values are presented as \textbf{Accuracy \textcolor{gray}{\scriptsize{(Gain)}}}. R$^2$VPO-OFF consistently achieves the highest relative gains across diverse thinking paradigms.}
\label{tab:final_optimized_clean}
\renewcommand{\arraystretch}{1.25}
\setlength{\tabcolsep}{5pt}
\resizebox{\textwidth}{!}{
\begin{tabular}{l | ccc | cccc | c}
\toprule
 & \multicolumn{3}{c|}{\textbf{Fast Thinking Models}} & \multicolumn{4}{c|}{\textbf{Slow Thinking Models}} & \textbf{Summary} \\
\cmidrule(lr){2-4} \cmidrule(lr){5-8} \cmidrule(lr){9-9}
\textbf{Method} & \textbf{Pangu-1B} & \textbf{Qwen-1.7B} & \textbf{Qwen-8B} & \textbf{DS-1.5B} & \textbf{Pangu-7B} & \textbf{Qwen-4B} & \textbf{Qwen-8B} & \textbf{Macro-Avg} \\
\midrule
\textit{Base Model} 
& 23.22 & 15.95 & 27.33 & 29.52 & 41.60 & 51.60 & 51.22 & 34.35 \\
\midrule
GRPO 
& 27.03 \textcolor{gray}{\scriptsize{(+16\%)}} & 30.70 \textcolor{gray}{\scriptsize{(+93\%)}} & 52.53 \textcolor{gray}{\scriptsize{(+92\%)}} & 29.23 \textcolor{gray}{\scriptsize{(-1.0\%)}} & 54.98 \textcolor{gray}{\scriptsize{(+32\%)}} & 50.13 \textcolor{gray}{\scriptsize{(-2.8\%)}} & 52.93 \textcolor{gray}{\scriptsize{(+3.4\%)}} & 42.50 \textcolor{gray}{\scriptsize{(+24\%)}} \\
GRPO-CH 
& 28.02 \textcolor{gray}{\scriptsize{(+21\%)}} & 33.78 \textcolor{gray}{\scriptsize{(+112\%)}} & 55.08 \textcolor{gray}{\scriptsize{(+102\%)}} & 30.87 \textcolor{gray}{\scriptsize{(+4.6\%)}} & 57.50 \textcolor{gray}{\scriptsize{(+38\%)}} & 52.15 \textcolor{gray}{\scriptsize{(+1.1\%)}} & 56.70 \textcolor{gray}{\scriptsize{(+11\%)}} & 44.87 \textcolor{gray}{\scriptsize{(+31\%)}} \\
GPPO 
& 27.18 \textcolor{gray}{\scriptsize{(+17\%)}} & 36.43 \textcolor{gray}{\scriptsize{(+128\%)}} & 55.75 \textcolor{gray}{\scriptsize{(+104\%)}} & 31.05 \textcolor{gray}{\scriptsize{(+5.2\%)}} & 56.09 \textcolor{gray}{\scriptsize{(+35\%)}} & 52.00 \textcolor{gray}{\scriptsize{(+0.8\%)}} & 54.72 \textcolor{gray}{\scriptsize{(+6.8\%)}} & 44.75 \textcolor{gray}{\scriptsize{(+30\%)}} \\
TOPR 
& 26.67 \textcolor{gray}{\scriptsize{(+15\%)}} & 34.68 \textcolor{gray}{\scriptsize{(+117\%)}} & 55.03 \textcolor{gray}{\scriptsize{(+101\%)}} & 31.21 \textcolor{gray}{\scriptsize{(+5.7\%)}} & 55.60 \textcolor{gray}{\scriptsize{(+34\%)}} & \textbf{53.13} \textcolor{red}{\scriptsize{(+3.0\%)}} & 55.67 \textcolor{gray}{\scriptsize{(+8.7\%)}} & 44.57 \textcolor{gray}{\scriptsize{(+30\%)}} \\
\rowcolor{gray!10} \textbf{R$^2$VPO-ON} 
& 27.78 \textcolor{red}{\scriptsize{(+20\%)}} & 37.37 \textcolor{red}{\scriptsize{(+134\%)}} & 53.93 \textcolor{red}{\scriptsize{(+97\%)}} & 29.45 \textcolor{gray}{\scriptsize{(-0.2\%)}} & \textbf{57.55} \textcolor{red}{\scriptsize{(+38\%)}} & 52.93 \textcolor{red}{\scriptsize{(+2.6\%)}} & 55.30 \textcolor{red}{\scriptsize{(+8.0\%)}} & 44.90 \textcolor{red}{\scriptsize{(+31\%)}} \\
\rowcolor{gray!20} \textbf{R$^2$VPO-OFF} 
& \textbf{28.17} \textcolor{red}{\scriptsize{(+21\%)}} & \textbf{38.03} \textcolor{red}{\scriptsize{(+138\%)}} & \textbf{57.40} \textcolor{red}{\scriptsize{(+110\%)}} & \textbf{34.17} \textcolor{red}{\scriptsize{(+16\%)}} & 56.32 \textcolor{red}{\scriptsize{(+35\%)}} & 52.00 \textcolor{red}{\scriptsize{(+0.8\%)}} & \textbf{58.20} \textcolor{red}{\scriptsize{(+14\%)}} & \textbf{46.33} \textcolor{red}{\scriptsize{(+35\%)}} \\
\bottomrule
\end{tabular}}
\end{table*}

\section{Experiment}
In this section, we conduct a comprehensive evaluation to verify the efficacy of R$^2$VPO. Our experiments are designed to answer two primary research questions (RQ):
\begin{itemize}
\item \textbf{RQ1 (Asymptotic Performance):} Does ratio variance regularization provide a more stable and effective optimization landscape than heuristic hard clipping, leading to superior reasoning capabilities?
\item \textbf{RQ2 (Sample Efficiency):} Can R$^2$VPO effectively leverage off-policy data via replay buffers to significantly reduce the computational cost (rollouts) required for convergence?
\end{itemize}
To strictly address these questions, our experimental design spans the full spectrum of action modalities. We first evaluate R$^2$VPO in the high-dimensional discrete action space of LLMs, specifically on complex mathematical reasoning benchmarks, to rigorously test its exploration capabilities and data reuse efficiency. Complementing this, we extend our assessment to the continuous action space of robotic locomotion and manipulation to verify the algorithm's robustness and physical stability.

For LLM reasoning, models are fine-tuned with a global batch size of $128$ and learning rate of $1\times10^{-6}$, utilizing an \textit{adaptive} dual-update strategy for $\lambda$. We evaluate both on-policy (R$^2$VPO-ON) and off-policy (R$^2$VPO-OFF) regimes; the latter employs a FIFO replay buffer (capacity $4$) with an Update-to-Data (UTD) ratio of $2$ to efficiently leverage historical data. For continuous control, agents utilize MLP networks as backend. We leverage massive parallelism ($2,048$ concurrent environments) for robust gradient estimation. Unlike the LLM setting, we employ a \textit{fixed} dual factor $\lambda$ given the dense reward signals in robotic tasks for training stability. Full hyperparameter configurations are detailed in Appendix~\ref{app:hyperparameters}.

\subsection{Complex Reasoning with LLMs}
We utilize the DAPO-Math-17K dataset~\citep{yu2025dapo} for training. To assess generalization, we evaluate performance on five challenging mathematical reasoning benchmarks: AIME 2024, AIME 2025~\citep{maaaime}, AMC 2023~\citep{aimoamc}, HMMT Feb 2025~\citep{harvardmmt}, and OlymMath~\citep{sun2025challenging}. Our experiments cover a comprehensive set of seven base models spanning both \textit{Fast Thinking} (e.g., Qwen3-1.7B/8B-Fast~\citep{yang2025qwen3}, openPangu-1B~\citep{chen2025pangu}) and \textit{Slow Thinking} paradigms (e.g., DeepSeek-Distill-Qwen-1.5B~\citep{shao2024deepseekmath}, openPangu-7B~\citep{chen2025pangu}, Qwen3-4/8B-Thinking~\citep{yang2025qwen3})\footnote{Following the Qwen3 technical report~\citep{yang2025qwen3}, we explicitly control the reasoning paradigm via prompting: appending \texttt{<think>} triggers the `Slow Thinking' mode (long-chain reasoning), while appending \texttt{<think></think>} enforces the `Fast Thinking' mode (direct response).}. For comparative analysis, we benchmark R$^2$VPO against four state-of-the-art policy gradient methods: GRPO~\citep{shao2024deepseekmath}, which employs standard symmetric clipping; GRPO-CH~\citep{yu2025dapo}, a variant with relaxed upper clipping thresholds; GPPO~\citep{su2025klear}, which prioritizes gradient direction preservation; and TOPR~\citep{roux2025tapered}, which utilizes asymmetric estimators for off-policy corrections.

Table~\ref{tab:final_optimized_clean} demonstrates the consistent superiority of R$^2$VPO. \textbf{R$^2$VPO-OFF} achieves the highest macro-average accuracy of \textbf{46.33\%} (\textbf{+35\%} relative gain over base models), significantly outperforming strong baselines like GRPO (+24\%) and GPPO (+30\%). The gains are most dramatic in Fast Thinking models, where \textit{Qwen3-1.7B-Fast} and \textit{Qwen3-8B-Fast} improve by \textbf{138\%} and \textbf{110\%}, respectively. As analyzed in Appendix~\ref{app:llm_training}, these performance jumps coincide with a substantial increase in response length, suggesting that variance-aware optimization successfully unlocks extended reasoning capabilities in smaller models.
Furthermore, R$^2$VPO-OFF outperforms its on-policy counterpart in 5 out of 7 settings, confirming that variance regularization effectively mitigates the instability of stale data. And, R$^2$VPO-ON remains highly competitive, notably surpassing the off-policy variant on \textit{openPangu-7B} (57.55\%), which validates the stability of the core variance-regularized objective even without data replay.

\begin{figure*}[t]
    \centering
    \includegraphics[width=\linewidth]{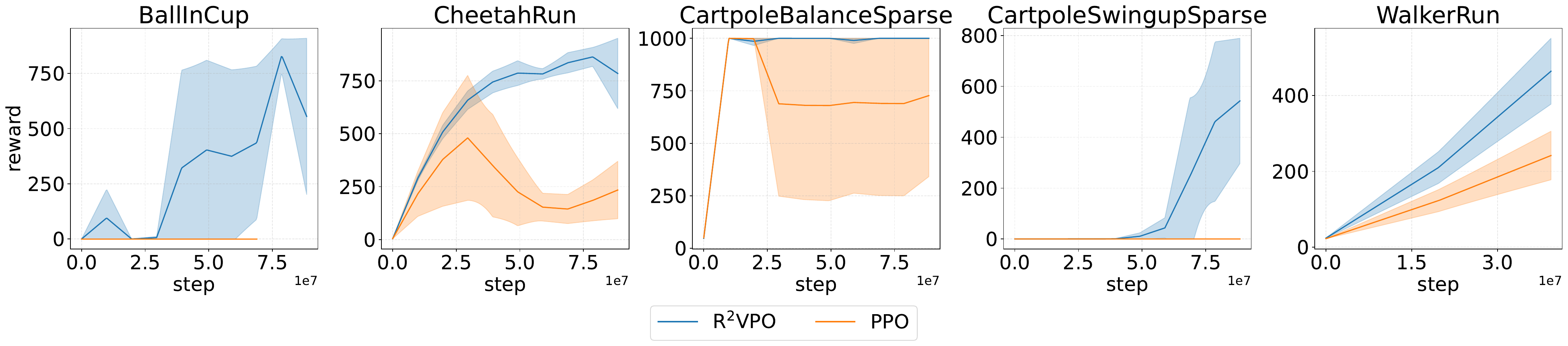}
    \caption{\textbf{Training Curves on Continuous Control Tasks (DeepMind Control Suite).} We compare R$^2$VPO-ON (blue) against PPO (orange) across locomotion and manipulation tasks, including those with sparse rewards. Solid lines denote the mean performance over $5$ independent training runs with different random seeds, and shaded regions denote standard deviation. R$^2$VPO demonstrates superior exploration in sparse settings (e.g., \texttt{Ball\_in\_Cup}) and prevents the performance collapse observed in PPO (e.g., \texttt{Cheetah\_Run}), highlighting the robustness of variance-aware constraints.}
    \label{fig:dmc_results}
\end{figure*}
\begin{figure*}
\centering
\begin{subfigure}[b]{0.24\textwidth}
    \centering
    \includegraphics[width=\textwidth]{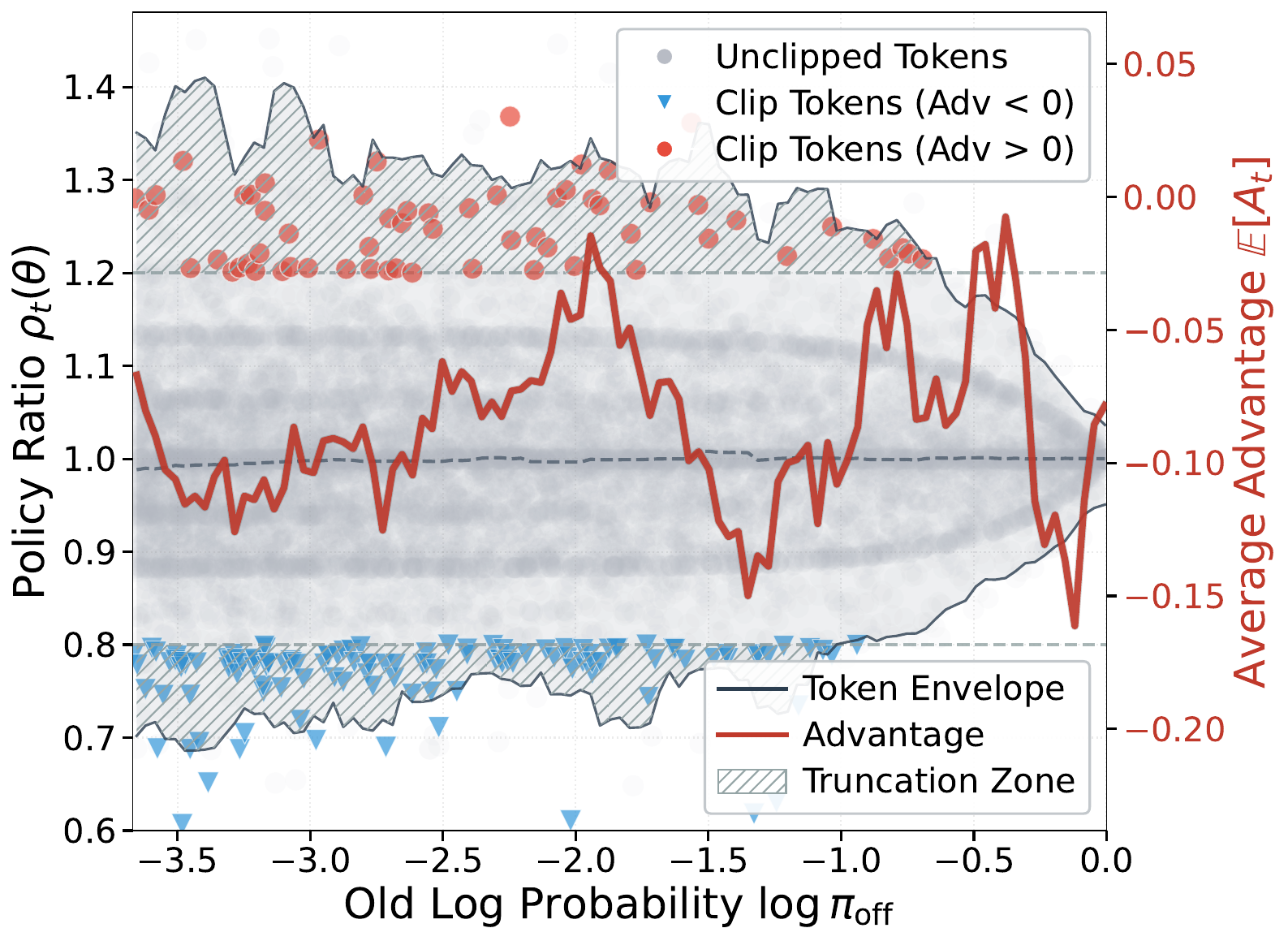}
    \caption{Clipping Visualization}
    \label{fig:clip_vis}
\end{subfigure}
\begin{subfigure}[b]{0.22\textwidth}
    \centering
    \includegraphics[width=\textwidth]{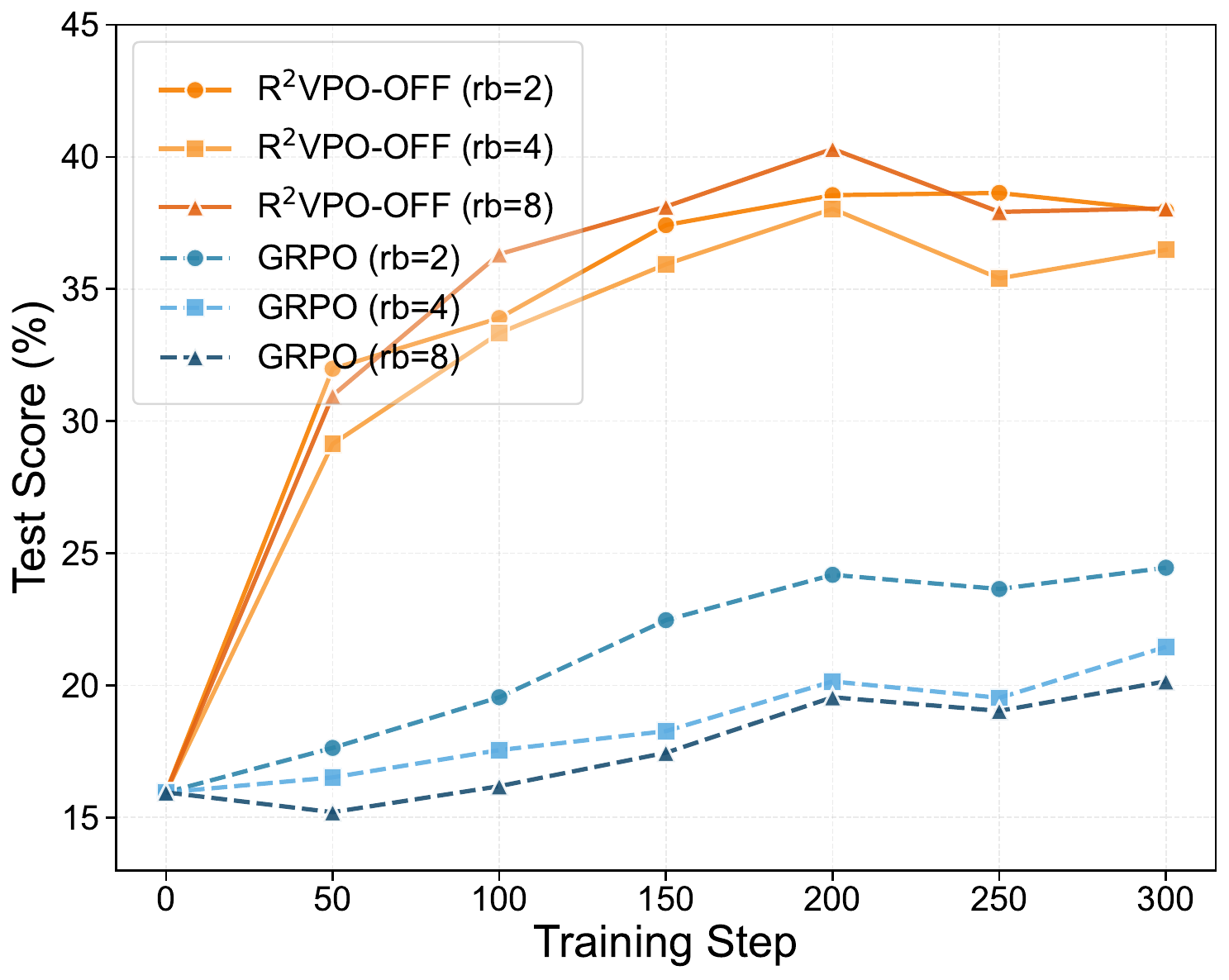}
    \caption{Data Staleness Perf.}
    \label{fig:data_staleness}
\end{subfigure}
\begin{subfigure}[b]{0.22\textwidth}
    \centering
    \includegraphics[width=\textwidth]{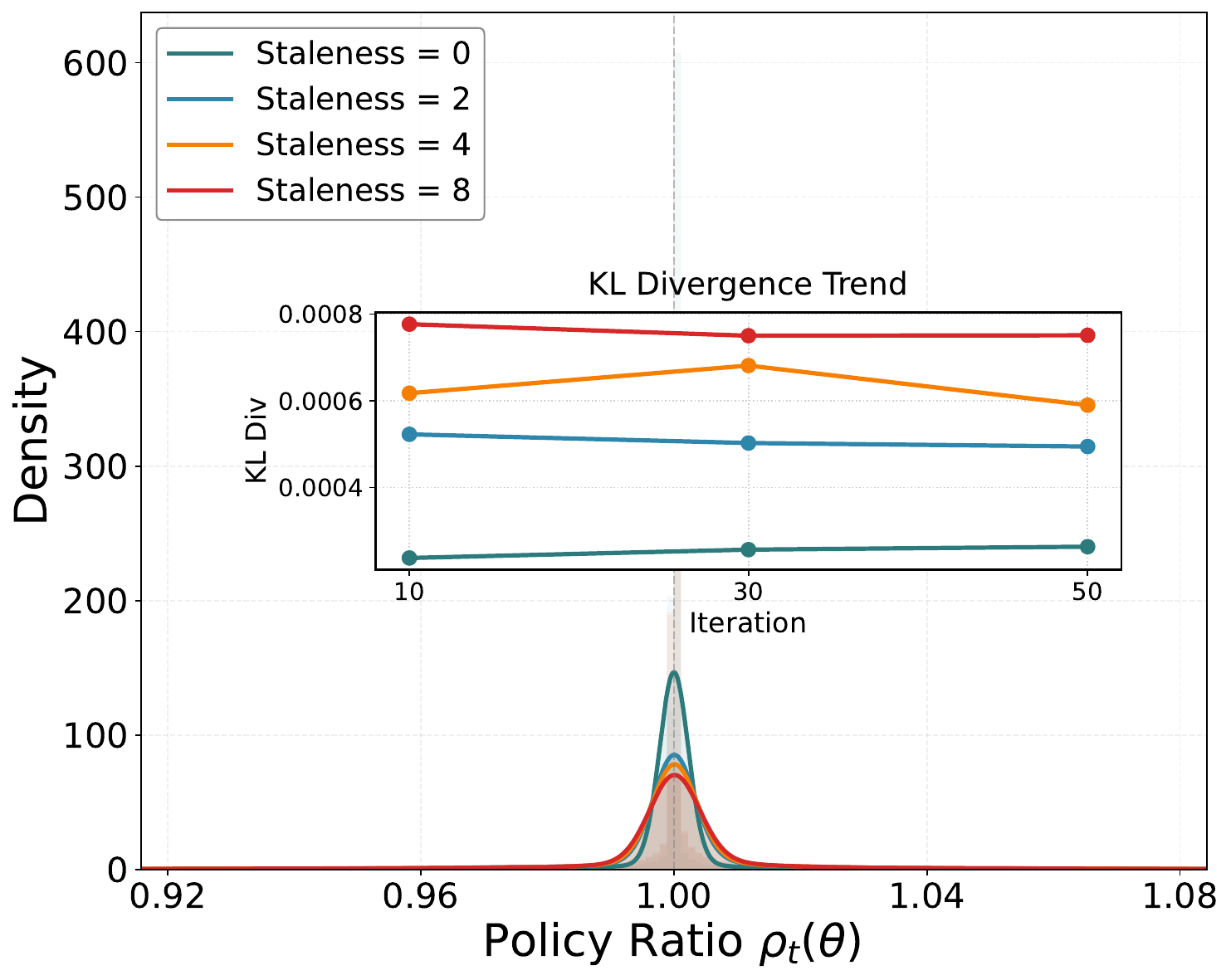}
    \caption{Ratio Distribution}
    \label{fig:ratio_distribution}
\end{subfigure}
\begin{subfigure}[b]{0.22\textwidth}
    \centering
    \includegraphics[width=\textwidth]{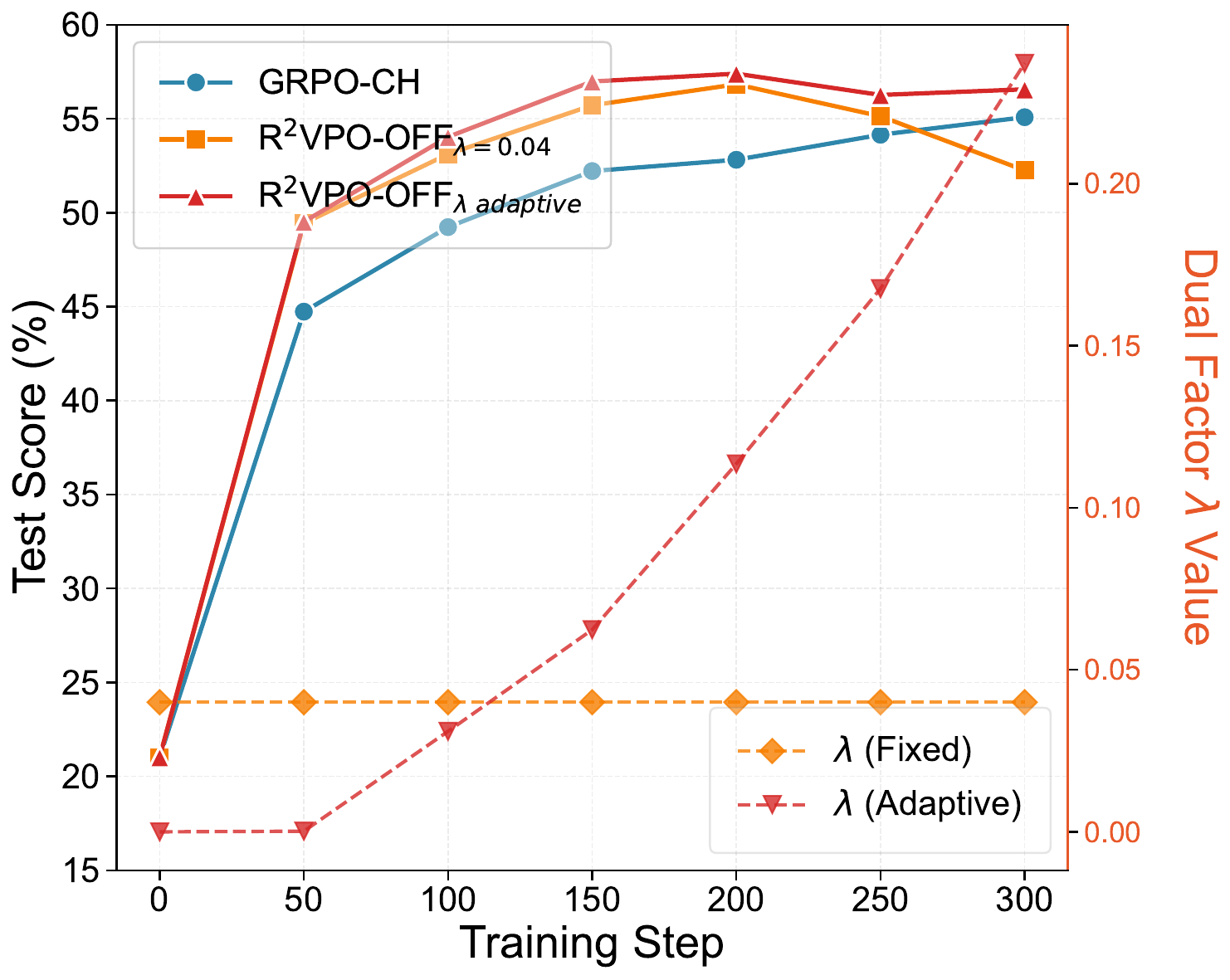}
    \caption{Dual Factor $\lambda$}
    \label{fig:aba_dual_lambda}
\end{subfigure}
\caption{\textbf{Mechanism Analysis.} 
\textbf{(a)} Hard clipping indiscriminately truncates high-value exploration. 
\textbf{(b)} R$^2$VPO exhibits superior robustness to data staleness compared to GRPO. 
\textbf{(c)} Bounded ratio distributions empirically validate the reliability of the second-order approximation. 
\textbf{(d)} The adaptive dual-update strategy ($\lambda_{\text{adaptive}}$) outperforms fixed constraints.}
\label{fig:ablation_for_all}
\end{figure*}

\subsection{Continuous Robotic Control}
To demonstrate generality beyond discrete reasoning, we evaluate R$^2$VPO in on-policy manner on continuous control tasks from DM control~\citep{tassa2018deepmind}, covering locomotion (e.g., \texttt{Walker}, \texttt{Cheetah}) and manipulation (e.g., \texttt{Ball\_in\_Cup}). As illustrated in Figure~\ref{fig:dmc_results}, R$^2$VPO exhibits significantly improved robustness compared to the PPO baseline. In sparse-reward tasks like \texttt{Ball\_in\_Cup}, PPO fails to learn (flatlining near zero), whereas R$^2$VPO successfully solves the task, evidencing superior exploration. In dense-reward tasks like \texttt{Cheetah\_Run}, PPO suffers from catastrophic performance collapse during late-stage training. In contrast, R$^2$VPO exhibits steady improvement without the late-stage collapse observed in PPO, suggesting that explicitly controlling the ratio variance yields a more stable optimization landscape than heuristic clipping. Full results on 10 environments are provided in Appendix~\ref{sec:robot_task}.

\subsection{Analysis of R$^2$VPO under Ratio}
\paragraph{The Necessity of Variance-Aware Constraints.}
Figure~\ref{fig:ablation_for_all}(a) visualizes a critical conflict in standard methods: high-value exploration tokens (red points) naturally exhibit high variance and often fall into the clipping region. Standard mechanisms indiscriminately zero out gradients for these informative samples. R$^2$VPO addresses this by replacing binary clipping with a distributional ``soft brake''.
This design is validated by the dynamics of the adaptive dual variable $\lambda$ shown in Figure~\ref{fig:ablation_for_all}(c). During early training, $\lambda$ remains naturally inactive ($\approx 0$), effectively simplifying the objective to an unconstrained form. This implies that the strict clipping enforced by GRPO during early phases is unnecessary and potentially detrimental. As the policy diverges later in training, $\lambda$ gradually increases (red dashed line), applying a stricter penalty exactly when needed to ensure stability. Consequently, the adaptive $\lambda$ strategy yields superior asymptotic performance compared to both fixed-$\lambda$ and GRPO-CH baselines.

\paragraph{Robustness to Data Staleness.}
We further investigate the impact of off-policy data staleness by varying the replay buffer capacity ($rb \in \{2, 4, 8\}$) on the \textit{Qwen3-1.7B-Fast} model. As shown in the performance curves in Figure~\ref{fig:ablation_for_all}(b), GRPO is highly sensitive to staleness, exhibiting significant degradation as the buffer size increases. In sharp contrast, R$^2$VPO demonstrates remarkable robustness. To understand the mechanism behind this, Figure~\ref{fig:ablation_for_all}(c) visualizes the distribution of policy ratios $\rho_t$ under varying staleness. We observe that in the context of LLM fine-tuning, increasing staleness does not lead to excessively extreme ratio distributions; this bounded behavior provides empirical support for the reliability of the second-order Taylor expansion underlying our variance-based constraint, even in off-policy regimes. R$^2$VPO effectively tolerates the resulting distribution shifts (as shown by the evolving average ratio in the inset plot) throughout training. This confirms that the variance-based penalty allows for the safe handling of stale data without the destructive truncation inherent in hard clipping. Additional visualizations of ratio dynamics across training steps are provided in Appendix~\ref{app:llm_training}.

\paragraph{Ablations on the dual factor $\lambda$.}
To validate the effectiveness of our adaptive constraint mechanism, we compare two distinct strategies on the \textit{Qwen3-8B-Fast} model: a static assignment ($\lambda=0.04$) and our proposed dynamic dual-update mechanism ($\lambda_{\text{adaptive}}$). As shown in Figure~\ref{fig:ablation_for_all}(c), both variance-regularized variants consistently outperform the GRPO-CH baseline on all benchmarks, confirming the fundamental efficacy of the ratio variance penalty. Crucially, the adaptive variant yields superior asymptotic performance over the static one. The visualization of $\lambda_{\text{adaptive}}$ (red dashed line) reveals a distinct trajectory: it remains near zero during the early ``warm-up'' phase and gradually increases as training progresses. This confirms our hypothesis that while early exploration benefits from relaxed constraints, a stricter penalty is increasingly necessary to stabilize updates as the policy diverges from the behavior distribution.

\section{Conclusion}
In this work, we proposed R$^2$VPO, a principled framework that transcends the limitations of heuristic hard clipping in general RL. By theoretically establishing ratio variance as a superior, distributional proxy for the trust region, we shifted the paradigm from indiscriminate ``pointwise truncation'' to adaptive ``soft regularization'', which preserves gradient signals from high-divergence yet informative exploration, and naturally accommodates the distribution shifts inherent in off-policy learning. Empirical evaluations across two distinct domains confirm that R$^2$VPO significantly outperforms standard clipping-based baselines in both asymptotic performance and sample efficiency. While variance regularization provides a robust approximation of the trust region, the penalty term can disproportionately dominate the optimization objective in regimes of extreme stochasticity or large policy divergence. This may result in overly conservative updates that slow down convergence in early training phases. Future work will investigate adaptive scaling mechanisms to mitigate the impact of excessive variance and extend this framework to complex, multi-turn Agentic RL workflows, where stability in long-horizon planning is critical.



\section*{Acknowledgements}
We sincerely thank all co-authors for their valuable contributions, insightful discussions, and continuous support throughout this work. We are also grateful to the reviewers for their constructive comments and suggestions, which helped us improve the clarity, positioning, and overall quality of the paper.

\section*{Impact Statement} 
This paper introduces R$^2$VPO, a framework aimed at enhancing the stability and sample efficiency of reinforcement learning in both Large Language Models (LLMs) and robotic control. The primary societal impact of this work lies in its potential to significantly reduce the computational resources required for training capable AI systems. By achieving convergence with approximately fewer rollouts and enabling efficient off-policy data reuse, our method contributes to the goals of ``Green AI'', lowering the carbon footprint associated with large-scale model post-training.

Furthermore, by improving the training stability of continuous control policies, this work supports the development of safer and more reliable embodied agents, reducing the risks of physical accidents during deployment. However, as with all advancements that enhance the reasoning and control capabilities of general-purpose AI, there remains a risk of misuse if applied to optimize harmful objectives. We advocate for the responsible deployment of these algorithms, accompanied by robust safety alignment protocols.





\bibliography{example_paper}
\bibliographystyle{icml2026}

\newpage
\appendix
\onecolumn
\section{Theoretical Analyses}
\begin{proposition}[Variance Approximation of $f$-Divergence]
\label{prop:variance_approx_appx}
Let $\pi_\theta$ be a policy in the neighborhood of a reference policy 
$\pi_{\mathrm{off}}$, and define the policy ratio
$\rho_\theta(a|s)=\pi_\theta(a|s)/\pi_{\mathrm{off}}(a|s)$.
Assume $f$ is twice continuously differentiable at $\rho=1$ with $f(1)=0$.
Then, for sufficiently small policy updates (i.e., $\rho_\theta \to 1$),
the expected $f$-divergence admits the second-order approximation
\begin{equation}
\begin{aligned}
\mathbb{E}_{\pi_{\mathrm{off}}}\!\left[D_f(\pi_\theta \| \pi_{\mathrm{off}})\right] = \frac{f''(1)}{2}\,\mathrm{Var}_{\pi_{\mathrm{off}}}[\rho_\theta] + \mathcal{O}\!\left(\mathbb{E}_{\pi_{\mathrm{off}}}[|\rho_\theta-1|^3]\right).
\end{aligned}
\end{equation}
where the variance is taken over $(s,a)\sim \pi_\mathrm{off}$. 
\end{proposition}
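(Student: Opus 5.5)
The plan is a pointwise second-order Taylor expansion of $f$ around $1$, followed by two normalization identities of the ratio. For each state $s$, write $D_f(\pi_\theta(\cdot|s)\|\pi_{\mathrm{off}}(\cdot|s))=\mathbb{E}_{a\sim\pi_{\mathrm{off}}(\cdot|s)}[f(\rho_\theta(a|s))]$. Then average over $s\sim d^{\pi_{\mathrm{off}}}$, so the left-hand side becomes the single expectation $\mathbb{E}_{(s,a)\sim\pi_{\mathrm{off}}}[f(\rho_\theta)]$. Since $f$ is $C^2$ near $1$ with $f(1)=0$, Taylor's theorem with the Lagrange (or integral) remainder gives, for $\rho$ in a neighborhood $U$ of $1$,
\begin{equation*}
f(\rho)=f'(1)(\rho-1)+\tfrac{f''(1)}{2}(\rho-1)^2+R(\rho).
\end{equation*}
To get the stated cubic order I will slightly strengthen the hypothesis, either to $f\in C^3$ near $1$ or to $f''$ Lipschitz near $1$. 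That gives $|R(\rho)|\le C|\rho-1|^3$ on $U$. Under mere continuity of $f''$ one only gets $o(|\rho-1|^2)$, and I would remark on this.

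Next, take the $\pi_{\mathrm{off}}$-expectation term by term. The linear term vanishes: $\mathbb{E}_{a\sim\pi_{\mathrm{off}}(\cdot|s)}[\rho_\theta(a|s)-1]=\sum_a\pi_\theta(a|s)-1=0$. This holds for every $s$, using that the support of $\pi_\theta$ is contained in the support of $\pi_{\mathrm{off}}$, so no $f'(1)$ term survives. The same identity gives $\mathbb{E}_{\pi_{\mathrm{off}}}[\rho_\theta]=1$ jointly over $(s,a)$. Hence $\mathbb{E}[(\rho_\theta-1)^2]=\mathrm{Var}_{\pi_{\mathrm{off}}}[\rho_\theta]$, and the quadratic term is exactly $\tfrac{f''(1)}{2}\mathrm{Var}_{\pi_{\mathrm{off}}}[\rho_\theta]$. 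This also justifies the substitution $\mathbb{E}[(\rho_t-1)^2]$ for the variance used in Eq.~\eqref{eq:var_problem}. For the remainder, $|\mathbb{E}[R(\rho_\theta)]|\le C\,\mathbb{E}[|\rho_\theta-1|^3]$.

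The main obstacle is making ``$\rho_\theta\to1$'' precise enough that the local remainder bound can be used inside the expectation. The cleanest route is to interpret the hypothesis as uniform closeness, $\sup_{s,a}|\rho_\theta(a|s)-1|\le r$ with $[1-r,1+r]\subset U$. Then the pointwise bound applies almost surely and the constant $C=\sup_U|f'''|/6$ is uniform. If only closeness in distribution is available, I would split the expectation on the event $\{|\rho_\theta-1|\le r\}$ and its complement. Off this event I would need a growth condition on $f$, such as $|f(\rho)-\tfrac{f''(1)}{2}(\rho-1)^2-f'(1)(\rho-1)|\le C'|\rho-1|^3$ for $|\rho-1|>r$. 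Such a condition holds for the generators in Table~\ref{tab:divergence_quadratic} on bounded ratio ranges, though not for $-\log u$ as $u\to0$. I will state the proposition under the uniform-closeness reading, and note that the table entries follow by computing $f''(1)$ for each generator.
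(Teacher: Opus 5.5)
Your proposal is correct and follows essentially the same route as the paper's proof: a second-order Taylor expansion of $f$ at $1$, the linear term vanishing because $\mathbb{E}_{\pi_{\mathrm{off}}}[\rho_\theta]=1$, and the identification $\mathbb{E}_{\pi_{\mathrm{off}}}[(\rho_\theta-1)^2]=\mathrm{Var}_{\pi_{\mathrm{off}}}[\rho_\theta]$. The extra hypotheses you add are what the paper's proof uses implicitly when it writes an $\mathcal{O}(|\rho-1|^3)$ remainder under only a $C^2$ assumption and passes it through the expectation: a $C^3$ or Lipschitz-$f''$ condition (without which the cubic order can genuinely fail), support containment, and uniform closeness $\sup_{s,a}|\rho_\theta-1|\le r$. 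Your version is therefore the more rigorous one.
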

\begin{proof}
By definition, the $f$-divergence between $\pi_\theta$ and 
$\pi_{\mathrm{off}}$ can be written as an expectation under the reference policy:
\begin{equation}
D_f(\pi_\theta \| \pi_{\mathrm{off}})=\mathbb{E}_{(s,a)\sim\pi_{\mathrm{off}}}\left[f\!\left(\rho_\theta(a|s)\right)\right],
\end{equation}
where $\rho_\theta(a|s)=\pi_\theta(a|s)/\pi_{\mathrm{off}}(a|s)$.
Since $\pi_\theta$ is assumed to be close to $\pi_{\mathrm{off}}$,
the ratio $\rho_\theta$ concentrates around $1$.
We perform a second-order Taylor expansion of $f(\rho)$
around $\rho=1$:
\begin{equation}
f(\rho)=f(1)+f'(1)(\rho-1)+\frac{1}{2}f''(1)(\rho-1)^2+\mathcal{O}(|\rho-1|^3).
\end{equation}
Substituting this expansion into the definition of the divergence and
taking expectation under $\pi_{\mathrm{off}}$, we obtain
\begin{equation}
\begin{aligned}
\mathbb{E}_{\pi_{\mathrm{off}}}[f(\rho_\theta)]
&=f(1)+f'(1)\mathbb{E}_{\pi_{\mathrm{off}}}[\rho_\theta-1]+\frac{1}{2}f''(1)\mathbb{E}_{\pi_{\mathrm{off}}}[(\rho_\theta-1)^2]
+\mathcal{O}\!\left(\mathbb{E}_{\pi_{\mathrm{off}}}[|\rho_\theta-1|^3]\right).
\end{aligned}
\end{equation}
By construction of the policy ratio,
\begin{equation}
\mathbb{E}_{\pi_{\mathrm{off}}}[\rho_\theta]=\sum_{a}\pi_{\mathrm{off}}(a|s)\frac{\pi_\theta(a|s)}{\pi_{\mathrm{off}}(a|s)}
=\sum_{a}\pi_\theta(a|s)=1,
\end{equation}
which implies that the first-order term vanishes:
$\mathbb{E}_{\pi_{\mathrm{off}}}[\rho_\theta-1]=0$.
Therefore, the leading term of the divergence is given by
\begin{equation}
\mathbb{E}_{\pi_{\mathrm{off}}}
\left[D_f(\pi_\theta \| \pi_{\mathrm{off}})\right]=\frac{f''(1)}{2}\mathbb{E}_{\pi_{\mathrm{off}}}[(\rho_\theta-1)^2]
+\mathcal{O}\!\left(\mathbb{E}_{\pi_{\mathrm{off}}}[|\rho_\theta-1|^3]\right).
\end{equation}
Finally, since $\mathbb{E}_{\pi_{\mathrm{off}}}[\rho_\theta]=1$,
the second central moment equals the variance:
\begin{equation}
\mathbb{E}_{\pi_{\mathrm{off}}}[(\rho_\theta-1)^2]=\mathrm{Var}_{\pi_{\mathrm{off}}}[\rho_\theta],
\end{equation}
which completes the proof.
\end{proof}

\begin{theorem}[Variance Bound on Clipping Error]
\label{thm:clip_bound_appr}
Let $\mathcal{J}^{\mathrm{UNC}}(\theta)= \mathbb{E}_{(s_t,a_t)\sim\pi_{\mathrm{off}}}
[\rho_t(\theta)\hat{A}_t]$ be the unclipped off-policy objective, and $\mathcal{J}^{\mathrm{CLIP}}(\theta)$ the clipped objective with clip range $\epsilon$.
Assume that the support of $\pi_\theta$ is contained in that of $\pi_{\mathrm{off}}$, and that $\mathbb{E}_{\pi_{\mathrm{off}}}[\rho_t^2] < \infty$. Then,
\begin{equation}
\left|\mathcal{J}^{\mathrm{UNC}}(\theta)-\mathcal{J}^{\mathrm{CLIP}}(\theta)\right|
\;\le\;\frac{A_{\max}}{\epsilon}\,\mathrm{Var}_{\pi_{\mathrm{off}}}
\!\left[\rho_t(\theta)\right].
\end{equation}
\end{theorem}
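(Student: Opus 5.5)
The plan is a pointwise bound followed by Chebyshev-type reasoning. Write $\mathcal{J}^{\mathrm{CLIP}}(\theta)=\mathbb{E}_{\pi_{\mathrm{off}}}[\min(\rho_t\hat{A}_t,\mathrm{clip}(\rho_t,1-\epsilon,1+\epsilon)\hat{A}_t)]$ as in Eq.~\eqref{eq:clip_obj}, and let $A_{\max}$ be an almost-sure bound $|\hat{A}_t|\le A_{\max}$. I will state this explicitly as the standing interpretation of $A_{\max}$. The integrand of $\mathcal{J}^{\mathrm{UNC}}-\mathcal{J}^{\mathrm{CLIP}}$ is
\[
\Delta_t=\rho_t\hat{A}_t-\min\bigl(\rho_t\hat{A}_t,\mathrm{clip}(\rho_t,1-\epsilon,1+\epsilon)\hat{A}_t\bigr).
\]
This quantity is nonnegative. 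It vanishes whenever $|\rho_t-1|\le\epsilon$, because then the clip is inactive.

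\textbf{Step 1 (pointwise bound).} When $|\rho_t-1|>\epsilon$, the clipped value is the nearest endpoint of $[1-\epsilon,1+\epsilon]$. Hence
\[
0\le\Delta_t\le|\hat{A}_t|\,\bigl|\rho_t-\mathrm{clip}(\rho_t,1-\epsilon,1+\epsilon)\bigr|=|\hat{A}_t|\,(|\rho_t-1|-\epsilon)_+ .
\]
The key elementary inequality is that for $x\ge 0$ we have $(x-\epsilon)_+\le x^2/\epsilon$. For $x\le\epsilon$ the left side is $0$. For $x>\epsilon$ we have $x-\epsilon<x\le x\cdot x/\epsilon$. This yields
\[
|\Delta_t|\le \frac{A_{\max}}{\epsilon}(\rho_t-1)^2 .
\]
A slightly sharper version, $(x-\epsilon)_+\le x^2/(4\epsilon)$, also holds, but the stated constant is enough.

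\textbf{Step 2 (take expectations).} Integrating under $(s_t,a_t)\sim\pi_{\mathrm{off}}$ gives
\[
|\mathcal{J}^{\mathrm{UNC}}-\mathcal{J}^{\mathrm{CLIP}}|\le\mathbb{E}|\Delta_t|\le\frac{A_{\max}}{\epsilon}\,\mathbb{E}_{\pi_{\mathrm{off}}}[(\rho_t-1)^2].
\]
The finiteness assumption $\mathbb{E}[\rho_t^2]<\infty$ guarantees that every term is integrable. The support assumption ensures $\rho_t$ is well defined. It also gives $\mathbb{E}_{\pi_{\mathrm{off}}}[\rho_t]=1$, via the same computation as in the proof of Proposition~\ref{prop:variance_approx_appx}. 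Therefore $\mathbb{E}[(\rho_t-1)^2]=\mathrm{Var}_{\pi_{\mathrm{off}}}[\rho_t]$, which finishes the argument.

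\textbf{Main obstacle.} The delicate point is interpretive rather than technical. First, $A_{\max}$ must be read as a uniform bound on $|\hat{A}_t|$. Second, the mean-one identity holds per state, so it needs care when states are drawn from $d^{\pi_{\mathrm{off}}}$. Averaging the per-state identity $\mathbb{E}_{a\sim\pi_{\mathrm{off}}(\cdot|s)}[\rho]=1$ over states still gives overall mean $1$. The resulting variance is the joint variance over $(s,a)$, which matches the convention of Proposition~\ref{prop:variance_approx}.

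If one prefers to avoid the pointwise inequality, there is an alternative route. Bound $\Delta_t\le A_{\max}|\rho_t-1|\mathbf{1}\{|\rho_t-1|>\epsilon\}$, then use $|\rho_t-1|\mathbf{1}\{|\rho_t-1|>\epsilon\}\le(\rho_t-1)^2/\epsilon$. This gives the same constant and is essentially Chebyshev's argument.
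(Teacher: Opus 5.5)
Your proof is correct and follows essentially the same route as the paper: bound the clipping residual $(|\rho_t-1|-\epsilon)_+$ pointwise by $(\rho_t-1)^2/\epsilon$, then take expectations and use $\mathbb{E}_{\pi_{\mathrm{off}}}[\rho_t]=1$ to identify the second moment with the variance. Your explicit handling of the $\min$ in the PPO surrogate, and your derivation of the mean-one identity from the support assumption (averaged over states), are small clarifications that the paper leaves implicit.
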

\begin{proof}
We first observe that the unclipped and clipped objectives differ
only when the policy ratio leaves the trust region
$[1-\epsilon,\,1+\epsilon]$.
Define the absolute deviation
\[
\Delta=\big|\mathcal{J}^{\mathrm{UNC}}(\theta)-\mathcal{J}^{\mathrm{CLIP}}(\theta)\big|.
\]
By linearity of expectation and the triangle inequality,
\begin{equation}
\Delta\le\mathbb{E}_{\pi_{\mathrm{off}}}\!\left[\big|\rho_t(\theta)\hat{A}_t
-\mathrm{clip}(\rho_t(\theta),1-\epsilon,1+\epsilon)\hat{A}_t\big|\right].
\end{equation}
Using the boundedness of advantages $|\hat{A}_t|\le A_{\max}$, we obtain
\begin{equation}
\Delta\le A_{\max}\,\mathbb{E}_{\pi_{\mathrm{off}}}\!\left[\big|
\rho_t(\theta)-\mathrm{clip}(\rho_t(\theta),1-\epsilon,1+\epsilon)\big|\right].
\end{equation}
The clipping residual can be written as
\[
\big|\rho_t-\mathrm{clip}(\rho_t,1-\epsilon,1+\epsilon)\big|
=\max\!\left(0,\;|\rho_t-1|-\epsilon\right).
\]
For all $x$ such that $|x|>\epsilon$, we have
\begin{equation}
    |x|-\epsilon \;\le\; |x|
    = \frac{x^2}{|x|}
    \;\le\; \frac{x^2}{\epsilon},
\end{equation}
where the last inequality follows from $|x|>\epsilon$. Applying this with $x=\rho_t-1$ yields
\begin{equation}
\max(0,|\rho_t-1|-\epsilon)
\;\le\;
\frac{(\rho_t-1)^2}{\epsilon}.
\end{equation}
Substituting this bound back into the expectation gives
\begin{equation}
\Delta\le\frac{A_{\max}}{\epsilon}\mathbb{E}_{\pi_{\mathrm{off}}}\!\left[(\rho_t(\theta)-1)^2\right]=\frac{A_{\max}}{\epsilon}\mathrm{Var}_{\pi_{\mathrm{off}}}[\rho_t(\theta)],
\end{equation}
which completes the proof.
\end{proof}

\section{Variance Approximation of divergence}
\label{sec:variance_appro_divergence}
To theoretically justify replacing standard KL-divergence constraints with our variance-based regularization, we empirically analyze the relationship between the variance of the importance ratio $\rho_t(\theta) = \pi_\theta(a_t|s_t) / \pi_{\text{off}}(a_t|s_t)$ and various statistical divergences. As illustrated in Figure~\ref{fig:divergence_approx}, we compare $\text{Var}(\rho_\theta)$ against six standard divergence metrics: Forward/Reverse KL-Divergence, Jensen-Shannon Divergence, Hellinger Distance, $\chi^2$-Divergence, and $\alpha$-Divergence ($\alpha=0.5$). The plots reveal a strong linear correlation in the near-policy region, aligning with the second-order Taylor expansion where $D_{f}(\pi_\theta || \pi_{\text{off}}) \approx \frac{f''(1)}{2} \text{Var}(\rho_\theta)$. This confirms that penalizing the variance of $\rho_t$ effectively acts as a stable, computationally efficient trust-region constraint.
\begin{figure}[!h]
    \centering
    \includegraphics[width=\linewidth]{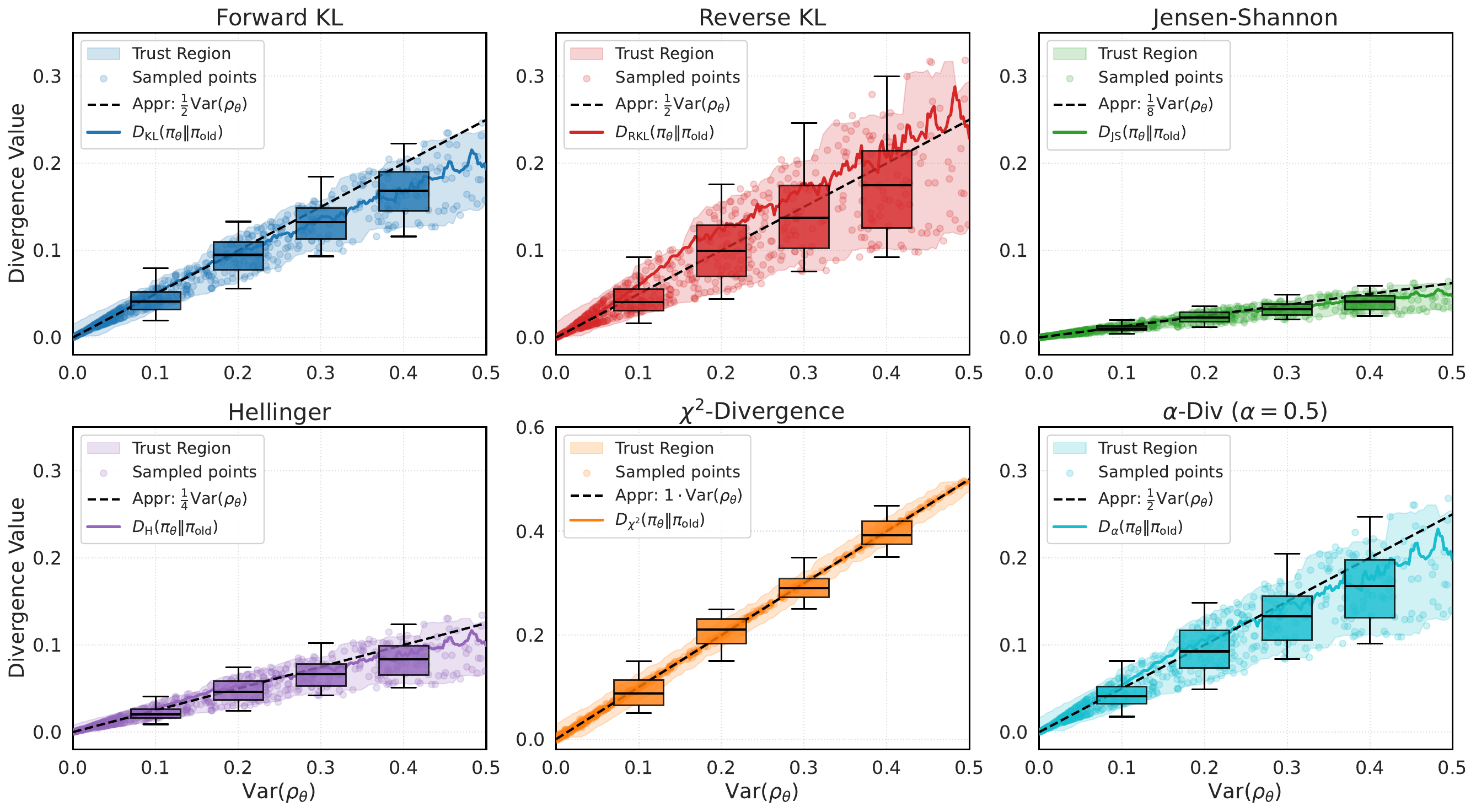}
    \caption{\textbf{Empirical Analysis of Variance as a Proxy for Divergence.} We visualize the relationship between the variance of policy ratios, $\text{Var}(\rho_\theta)$, and six common divergence metrics across sampled policy updates. The dashed black line represents the theoretical second-order approximation.}
    \label{fig:divergence_approx}
\end{figure}

\section{Hyperparameters}
\label{app:hyperparameters}
We list the detailed hyperparameters used for both LLM mathematical reasoning and continuous robotic control experiments in Table~\ref{tab:unified_hyperparams}.

\begin{table}[!h]
    \centering
    \small
    \caption{\textbf{Unified Hyperparameters.} Detailed configuration for LLM mathematical reasoning tasks (Part I) and MuJoCo Playground continuous control tasks (Part II). R$^2$VPO utilizes adaptive dual updates for LLMs and fixed dual factors for robotics tasks.}
    \label{tab:unified_hyperparams}
    \vspace{2mm}
    \begin{tabular}{l|c}
        \toprule
        \textbf{Parameter} & \textbf{Value} \\
        \midrule
        \multicolumn{2}{c}{\cellcolor{gray!10}\textbf{Part I: LLM Mathematical Reasoning}} \\
        \midrule
        \multicolumn{2}{l}{\textit{Common Training Configuration}} \\
        Optimizer & AdamW \\
        Learning Rate & $1 \times 10^{-6}$ \\
        Global Batch Size & 128 \\
        Group Sampling Size ($G$) & 8 \\
        Advantage Estimator & Relative Group \\
        Max Model Length & 16384 \\
        Gradient Accumulation Steps & 2 \\
        Clip Gradient Norm & 1.0 \\
        Training Temperature & 1.0 \\
        Training Top-p & 1.0 \\
        Training Top-k & -1 \\
        \midrule
        \multicolumn{2}{l}{\textit{Test / Inference Configuration}} \\
        Temperature & 0.6 \\
        Top-p & 0.95 \\
        Top-k & 20 \\
        \midrule
        \multicolumn{2}{l}{\textit{R$^2$VPO Specifics (LLM)}} \\
        Dual Factor Strategy & Adaptive (Primal-Dual) \\
        Initial $\lambda$ & $0.0$ \\
        Dual Learning Rate ($\eta_\lambda$) & $5 \times 10^{-3}$ \\
        Target Tolerance ($\delta$) & $1 \times 10^{-3}$ \\
        \textit{Off-Policy Settings:} & \\
        \quad Replay Buffer Type & FIFO \\
        \quad Replay Buffer Capacity & 4 Iterations \\
        \quad Update-to-Data (UTD) Ratio & 2 \\
        \quad Sampling Strategy & Uniform \\
        \midrule
        \midrule
        \multicolumn{2}{c}{\cellcolor{gray!10}\textbf{Part II: Continuous Robotic Control (DM Control)}} \\
        \midrule
        \multicolumn{2}{l}{\textit{Training Configuration}} \\
        Optimizer & Adam \\
        Learning Rate & $1 \times 10^{-3}$ \\
        Max Grad Norm & 2.0 \\
        Episode Length & 1000 \\
        Action Repeat & 1 \\
        Rollout Length & 30 \\
        Parallel Envs & 2048 \\
        Minibatches & 32 \\
        Epochs per Update & 16 \\
        Batch Size & 1024 \\
        \midrule
        \multicolumn{2}{l}{\textit{Algorithm Specifics}} \\
        Discount Factor ($\gamma$) & 0.995 \\
        Reward Scaling & 10.0 \\
        Observation Normalization & True \\
        Dual Factor ($\lambda$) & 0.06 (Fixed) \\
        Network Architecture & MLP: (256)$\times$5 (Policy) / (256)$\times$5 (Value) \\
        \bottomrule
    \end{tabular}
\end{table}

\clearpage
\newpage

\section{More Experimental Results}
\FloatBarrier
\subsection{Results on LLMs Training}
\label{app:llm_training}
We present the performance comparison of all LLMs across different benchmarks, as shown in Table~\ref{tab:full_comparison_compact}.
\begin{table*}[!h]
\centering
\scriptsize
\renewcommand{\arraystretch}{1.13}
\setlength{\tabcolsep}{3.8pt}
\caption{\textbf{Comprehensive Benchmarking Results.} We compare R$^2$VPO against multiple strong baselines (GRPO, GRPO-CH, GPPO, TOPR) across seven model scales. \textbf{Avg} reports the average accuracy, with the relative improvement over \textit{Base} shown in parentheses.}
\label{tab:full_comparison_compact}

\begin{tabular}{l | c c c c c | l}
\toprule
\textbf{Method} & \textbf{AIME 24} & \textbf{AIME 25} & \textbf{AMC 23} & \textbf{HMMT} & \textbf{OlymMath} & \textbf{Avg} \textit{\tiny{(Gain vs Base)}} \\
\midrule

\multicolumn{7}{c}{\cellcolor{gray!10}\textbf{\textit{openPangu-Embedded-1B}}} \\
\quad Base & 20.83 & 21.67 & 60.00 & 9.59 & 4.00 & 23.22 \\
\quad GRPO & 27.50 & 24.58 & 65.83 & 11.25 & 6.00 & 27.03 \textcolor{gray}{\tiny{(+16.4\%)}} \\
\quad GRPO-CH & 28.75 & 25.83 & 67.08 & 11.43 & 7.00 & 28.02 \textcolor{gray}{\tiny{(+20.7\%)}} \\
\quad GPPO & 30.41 & 25.42 & 62.92 & \textbf{11.67} & 5.50 & 27.18 \textcolor{gray}{\tiny{(+17.1\%)}} \\
\quad TOPR & 26.66 & 25.00 & 63.33 & 10.84 & \textbf{7.50} & 26.67 \textcolor{gray}{\tiny{(+14.9\%)}} \\
\quad R$^2$VPO-ON & \textbf{31.67} & 24.17 & 65.83 & 11.25 & 6.00 & 27.78 \textcolor{red}{\tiny{(+19.7\%)}} \\
\quad \textbf{R$^2$VPO-OFF} & 28.75 & \textbf{28.33} & \textbf{67.50} & 11.25 & 5.00 & \textbf{28.17} \textcolor{red}{\tiny{(+21.3\%)}} \\

\midrule
\multicolumn{7}{c}{\cellcolor{gray!10}\textbf{\textit{Qwen3-1.7B-Fast}}} \\
\quad Base & 14.17 & 12.50 & 45.00 & 4.58 & 3.50 & 15.95 \\
\quad GRPO & 34.17 & 27.08 & 67.92 & 8.50 & 15.83 & 30.70 \textcolor{gray}{\tiny{(+92.5\%)}} \\
\quad GRPO-CH & 36.67 & 31.25 & 73.33 & 11.00 & 16.67 & 33.78 \textcolor{gray}{\tiny{(+111.8\%)}} \\
\quad GPPO & 39.58 & \textbf{35.42} & 77.92 & 10.50 & 18.75 & 36.43 \textcolor{gray}{\tiny{(+128.4\%)}} \\
\quad TOPR & 36.25 & 33.75 & 71.25 & 13.00 & 19.17 & 34.68 \textcolor{gray}{\tiny{(+117.4\%)}} \\
\quad R$^2$VPO-ON & 40.42 & \textbf{35.42} & \textbf{82.08} & 11.00 & 17.92 & 37.37 \textcolor{red}{\tiny{(+134.3\%)}} \\
\quad \textbf{R$^2$VPO-OFF} & \textbf{45.00} & 30.83 & 80.42 & \textbf{13.50} & \textbf{20.42} & \textbf{38.03} \textcolor{red}{\tiny{(+138.4\%)}} \\

\midrule
\multicolumn{7}{c}{\cellcolor{gray!10}\textbf{\textit{DeepSeek-Distill-Qwen-1.5B}}} \\
\quad Base & 29.58 & 23.75 & 72.50 & \textbf{13.75} & \textbf{8.00} & 29.52 \\
\quad GRPO & 29.58 & 25.00 & 74.58 & 10.00 & 7.00 & 29.23 \textcolor{gray}{\tiny{(-1.0\%)}} \\
\quad GRPO-CH & 32.08 & 25.42 & 77.92 & 12.92 & 6.00 & 30.87 \textcolor{gray}{\tiny{(+4.6\%)}} \\
\quad GPPO & 33.33 & 26.67 & 77.50 & 11.25 & 6.50 & 31.05 \textcolor{gray}{\tiny{(+5.2\%)}} \\
\quad TOPR & 35.83 & 25.83 & 75.83 & 12.08 & 6.50 & 31.21 \textcolor{gray}{\tiny{(+5.7\%)}} \\
\quad R$^2$VPO-ON & 29.58 & 27.08 & 75.00 & 9.58 & 6.00 & 29.45 \textcolor{red}{\tiny{(-0.2\%)}} \\
\quad \textbf{R$^2$VPO-OFF} & \textbf{40.42} & \textbf{28.75} & \textbf{82.08} & 12.08 & 7.50 & \textbf{34.17} \textcolor{red}{\tiny{(+15.8\%)}} \\

\midrule
\multicolumn{7}{c}{\cellcolor{gray!10}\textbf{\textit{openPangu-Embedded-7B}}} \\
\quad Base & 51.67 & 42.92 & 76.25 & 24.17 & 13.00 & 41.60 \\
\quad GRPO & 64.58 & 59.17 & 92.92 & 31.25 & 27.00 & 54.98 \textcolor{gray}{\tiny{(+32.2\%)}} \\
\quad GRPO-CH & 67.92 & \textbf{61.25} & \textbf{93.75} & 32.08 & \textbf{32.50} & 57.50 \textcolor{gray}{\tiny{(+38.2\%)}} \\
\quad GPPO & \textbf{70.42} & 60.83 & 91.88 & 28.33 & 29.00 & 56.09 \textcolor{gray}{\tiny{(+34.8\%)}} \\
\quad TOPR & 68.33 & 55.00 & 92.81 & 33.33 & 28.50 & 55.60 \textcolor{gray}{\tiny{(+33.6\%)}} \\
\quad \textbf{R$^2$VPO-ON} & 67.98 & 59.33 & 92.92 & \textbf{35.00} & \textbf{32.50} & \textbf{57.55} \textcolor{red}{\tiny{(+38.3\%)}} \\
\quad R$^2$VPO-OFF & 70.00 & 54.58 & 91.67 & 33.33 & 32.00 & 56.32 \textcolor{red}{\tiny{(+35.4\%)}} \\

\midrule
\multicolumn{7}{c}{\cellcolor{gray!10}\textbf{\textit{Qwen3-8B-Fast}}} \\
\quad Base & 25.83 & 21.67 & 68.75 & 12.92 & 7.50 & 27.33 \\
\quad GRPO & 68.33 & 52.50 & 89.58 & 21.00 & 31.25 & 52.53 \textcolor{gray}{\tiny{(+92.2\%)}} \\
\quad GRPO-CH & 68.33 & 57.08 & \textbf{95.42} & \textbf{27.50} & 27.08 & 55.08 \textcolor{gray}{\tiny{(+101.5\%)}} \\
\quad GPPO & \textbf{71.25} & 59.17 & 95.00 & 22.50 & 30.83 & 55.75 \textcolor{gray}{\tiny{(+104.0\%)}} \\
\quad TOPR & 69.58 & 55.42 & 94.58 & 26.00 & 29.58 & 55.03 \textcolor{gray}{\tiny{(+101.3\%)}} \\
\quad R$^2$VPO-ON & 68.75 & 56.67 & \textbf{95.42} & 25.50 & 23.33 & 53.93 \textcolor{red}{\tiny{(+97.3\%)}} \\
\quad \textbf{R$^2$VPO-OFF} & 70.00 & \textbf{61.67} & \textbf{95.42} & 27.00 & \textbf{32.92} & \textbf{57.40} \textcolor{red}{\tiny{(+110.0\%)}} \\

\midrule
\multicolumn{7}{c}{\cellcolor{gray!10}\textbf{\textit{Qwen3-4B-Thinking}}} \\
\quad Base & 62.08 & 48.33 & 90.00 & \textbf{29.58} & \textbf{28.00} & 51.60 \\
\quad GRPO & 67.08 & 54.58 & 90.83 & 21.67 & 16.50 & 50.13 \textcolor{gray}{\tiny{(-2.8\%)}} \\
\quad GRPO-CH & 67.08 & 56.67 & 94.58 & 22.92 & 19.50 & 52.15 \textcolor{gray}{\tiny{(+1.1\%)}} \\
\quad GPPO & 67.08 & 57.50 & \textbf{95.83} & 22.08 & 17.50 & 52.00 \textcolor{gray}{\tiny{(+0.8\%)}} \\
\quad \textbf{TOPR} & \textbf{68.75} & \textbf{61.67} & 95.42 & 20.83 & 19.00 & \textbf{53.13} \textcolor{gray}{\tiny{(+3.0\%)}} \\
\quad R$^2$VPO-ON & 68.33 & 60.00 & 95.00 & 23.33 & 18.00 & 52.93 \textcolor{red}{\tiny{(+2.6\%)}} \\
\quad R$^2$VPO-OFF & 63.75 & 56.25 & 94.58 & 25.42 & 20.00 & 52.00 \textcolor{red}{\tiny{(+0.8\%)}} \\

\midrule
\multicolumn{7}{c}{\cellcolor{gray!10}\textbf{\textit{Qwen3-8B-Thinking}}} \\
\quad Base & 60.83 & 46.67 & 87.50 & 32.08 & 29.00 & 51.22 \\
\quad GRPO & 67.08 & 53.33 & 92.08 & 29.17 & 23.00 & 52.93 \textcolor{gray}{\tiny{(+3.4\%)}} \\
\quad GRPO-CH & 72.08 & 59.17 & 92.92 & 33.33 & 26.00 & 56.70 \textcolor{gray}{\tiny{(+10.7\%)}} \\
\quad GPPO & 70.00 & \textbf{62.50} & 93.75 & 23.33 & 24.00 & 54.72 \textcolor{gray}{\tiny{(+6.8\%)}} \\
\quad TOPR & 70.00 & 57.50 & 91.67 & 34.17 & 25.00 & 55.67 \textcolor{gray}{\tiny{(+8.7\%)}} \\
\quad R$^2$VPO-ON & \textbf{75.00} & \textbf{65.00} & \textbf{95.00} & 22.50 & 19.00 & 55.30 \textcolor{red}{\tiny{(+8.0\%)}} \\
\quad \textbf{R$^2$VPO-OFF} & 70.00 & 60.83 & 94.17 & \textbf{35.00} & \textbf{31.00} & \textbf{58.20} \textcolor{red}{\tiny{(+13.6\%)}} \\
\bottomrule
\end{tabular}
\end{table*}

\newpage

To further investigate the optimization efficiency and behavioral changes induced by our method, we visualize the training dynamics across all seven base models.

\paragraph{Comparison with an Exact KL Penalty.}
We additionally compare R$^2$VPO with an exact KL-penalty variant, which replaces the quadratic ratio-variance term with an empirical KL penalty. As shown in Figure~\ref{fig:r2vpo_vs_grpo_kl}, the two methods exhibit similar training dynamics in the studied regime, consistent with the observation that the empirical ratio distributions remain concentrated near one. This supports our interpretation that, in practical PPO/GRPO-style updates, the main trade-off is not improved global exactness, but simplicity and deployability: exact KL penalties preserve the full divergence geometry, whereas R$^2$VPO keeps the shared local second-order structure and implements it as a lightweight ratio-space regularizer.
\begin{figure}[!h]
    \centering
    \includegraphics[width=\linewidth]{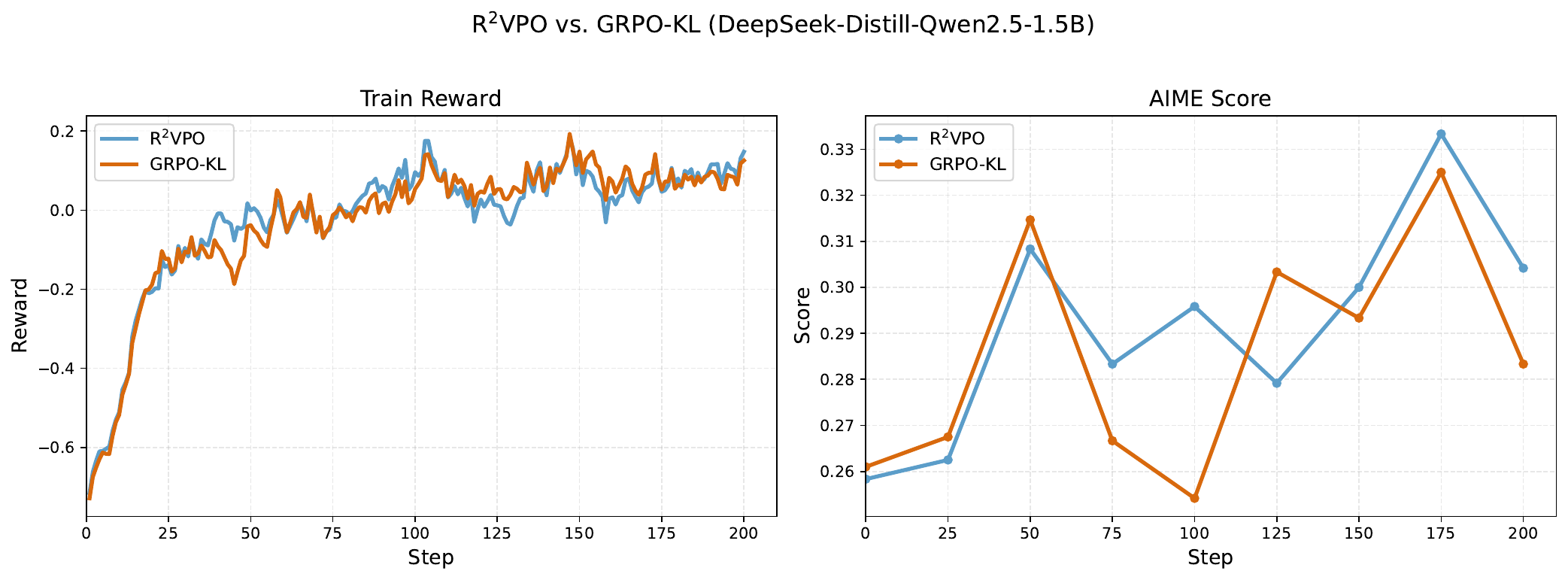}
    \caption{\textbf{Comparison with Exact KL Penalty.}
We compare R$^2$VPO with a GRPO-KL variant on DeepSeek-Distill-Qwen2.5-1.5B.
R$^2$VPO exhibits similar reward and AIME-score dynamics to the exact KL-penalty baseline, supporting the effectiveness of ratio variance as a lightweight local trust-region surrogate.
}
    \label{fig:r2vpo_vs_grpo_kl}
\end{figure}
\begin{figure}[!h]
    \centering
    \includegraphics[width=\linewidth]{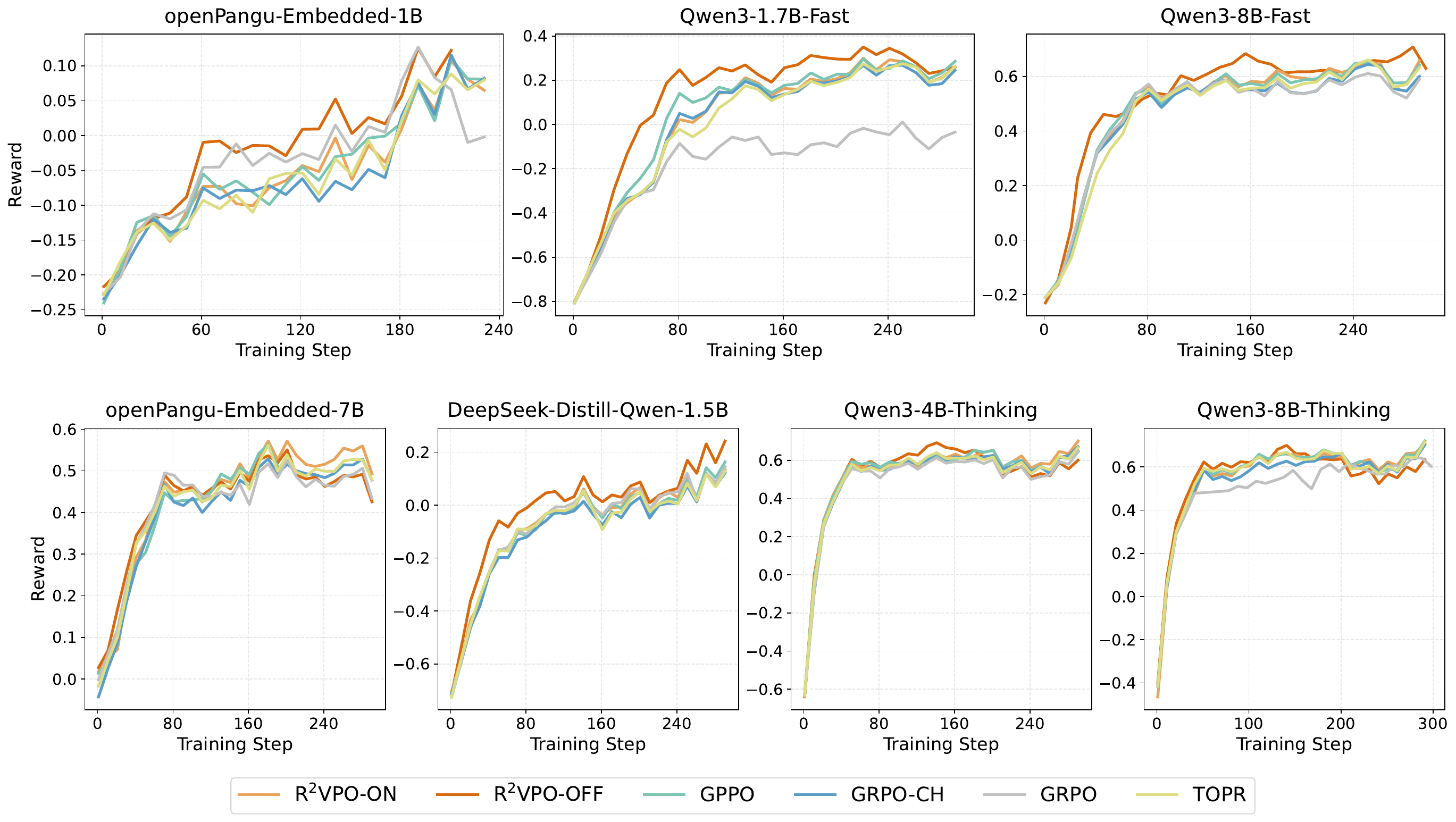}
    \caption{\textbf{Training Reward Dynamics across Seven Model Scales.} We compare the average episode reward curves of R$^2$VPO (orange/brown) against baselines throughout the training process. R$^2$VPO-OFF consistently demonstrates faster reward convergence and higher asymptotic performance, particularly on smaller models and distilled models, validating the efficiency of variance-regularized off-policy learning.}
\label{fig:app_reward_curves}
\end{figure}

\paragraph{Reward Convergence.}
As illustrated in Figure~\ref{fig:app_reward_curves}, R$^2$VPO exhibits superior sample efficiency. In both Fast and Slow thinking paradigms, R$^2$VPO-OFF consistently achieves faster reward convergence and higher final returns. This advantage is particularly pronounced in the \textit{Qwen3-1.7B-Fast} and \textit{DeepSeek-Distill-Qwen-1.5B} models, where baselines like GRPO struggle to improve or plateau early. This supports our claim that variance-aware regularization provides a robust signal for policy improvement, preventing the vanishing gradients often caused by aggressive clipping.

\paragraph{Reasoning Length Evolution.}
Figure~\ref{fig:app_length_curves} tracks the average number of tokens per response. We observe two distinct behavioral shifts correlated with performance gains:
\begin{itemize}
\item \textbf{Expansion in Fast Thinking:} For models like \textit{Qwen3-1.7B-Fast} and \textit{Qwen3-8B-Fast}, R$^2$VPO triggers a rapid expansion in reasoning length. This aligns with their dramatic performance jumps, suggesting that the method successfully unlocks the models' capacity for extended reasoning chains.
\item \textbf{Retention in Slow Thinking:} For models initialized with long-context capabilities (e.g., \textit{Qwen3-4B-Thinking}), RL training naturally condenses the output. However, unlike GRPO, which often suffers from excessive length reduction (length collapse), R$^2$VPO preserves significantly more reasoning tokens. This indicates that our variance constraint protects valuable, complex reasoning paths from being prematurely pruned, maintaining the depth of thought required for hard benchmarks.
\end{itemize}

\begin{figure}[!t]
    \centering
    \includegraphics[width=\linewidth]{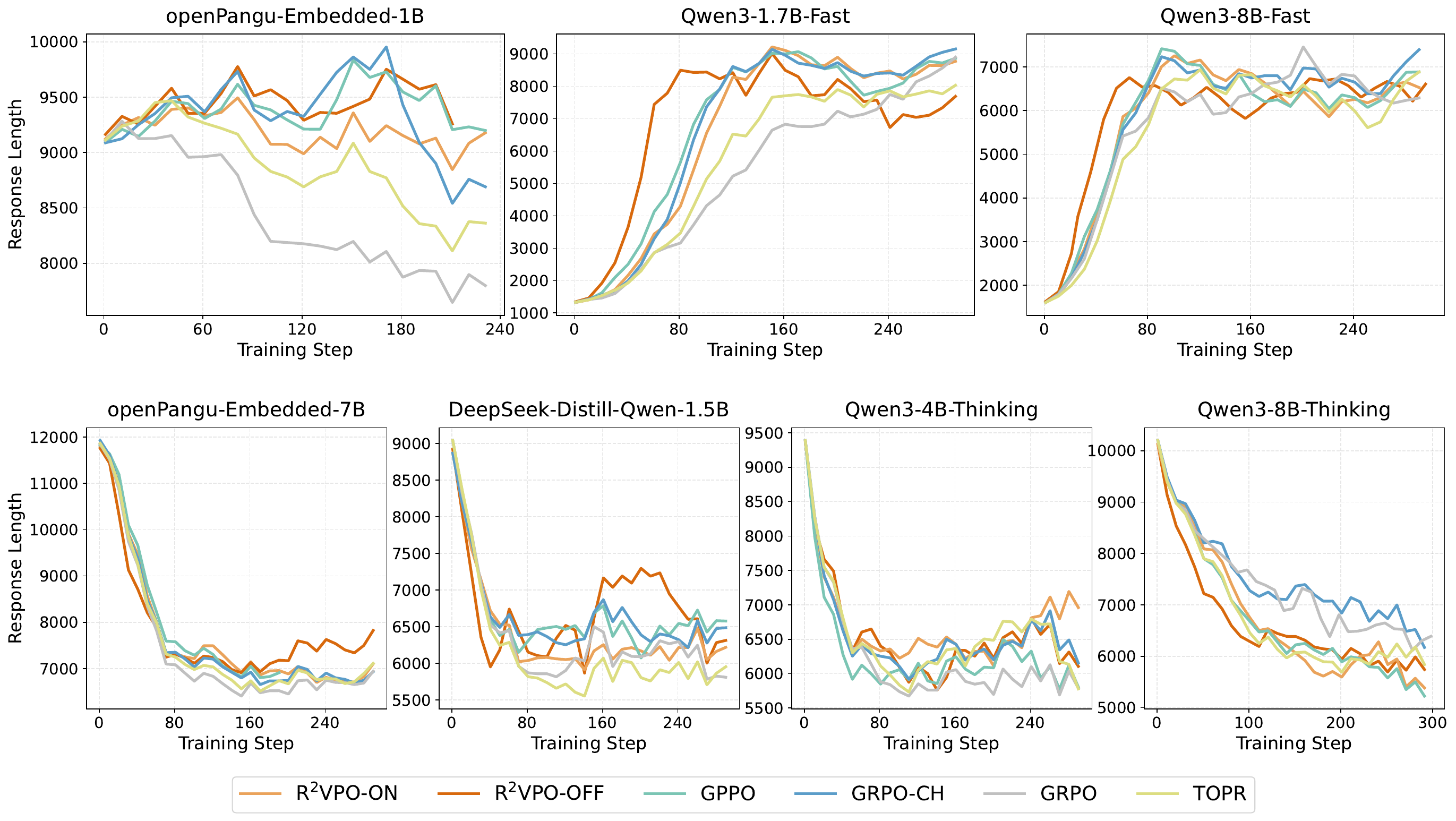}
    \caption{\textbf{Evolution of Response Length during Training.} The dynamics exhibit two distinct patterns: (1) For \textit{Fast Thinking} models (top row), R$^2$VPO drives a substantial increase in response length, unlocking extended reasoning capabilities. (2) For \textit{Slow Thinking} models (bottom row), while all methods reduce initial redundancy, R$^2$VPO maintains significantly longer reasoning chains than GRPO (grey line), which often suffers from length collapse. This suggests that variance-aware optimization effectively preserves complex reasoning paths that are otherwise truncated by heuristic clipping.}
\label{fig:app_length_curves}
\end{figure}

\paragraph{Detailed Analysis of Ratio Dynamics.}
To further validate the stability of R$^2$VPO under off-policy conditions, we visualize the evolution of policy ratio distributions across varying degrees of data staleness ($rb \in \{0, 2, 4, 8\}$) and training phases in Figure~\ref{fig:app_ratio_vis}. Crucially, the distributions remain tightly concentrated around $\rho_t=1$ (indicated by the red vertical line) even at maximum staleness, providing strong empirical evidence that the optimization trajectory stays within the valid regime of our second-order variance approximation throughout the training process.

\begin{figure}[!h]
\centering
\includegraphics[width=\linewidth]{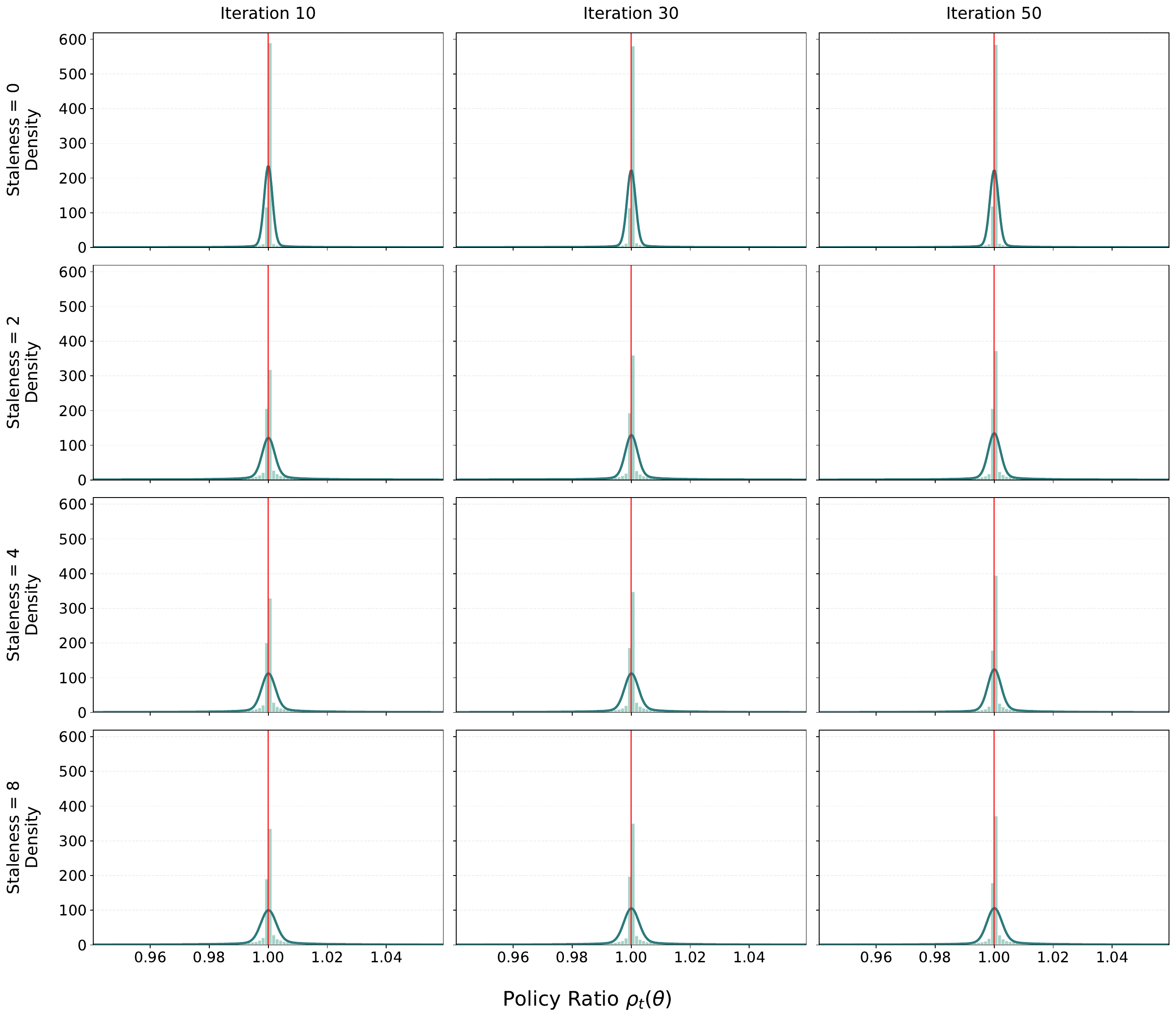}
\caption{\textbf{Evolution of Policy Ratio Distributions.} The grid visualizes the density of $\rho_t(\theta)$ across training iterations (columns) and data staleness levels (rows). The consistent concentration around the center (red line at $\rho=1$) confirms that R$^2$VPO effectively constrains divergence and prevents distributional collapse, even when reusing stale data from a large replay buffer ($rb=8$).}
\label{fig:app_ratio_vis}
\end{figure}

\clearpage
\newpage

\subsection{Results on Robotics Tasks}
\label{sec:robot_task}
To provide a qualitative overview of our experimental settings, we visualize the scenes of the selected DeepMind Control Suite tasks in Figure~\ref{fig:dmc_scenes}. These environments cover a diverse range of challenges, including high-dimensional locomotion and sparse-reward manipulation.
\begin{figure}[h]
    \centering
    \includegraphics[width=\linewidth]{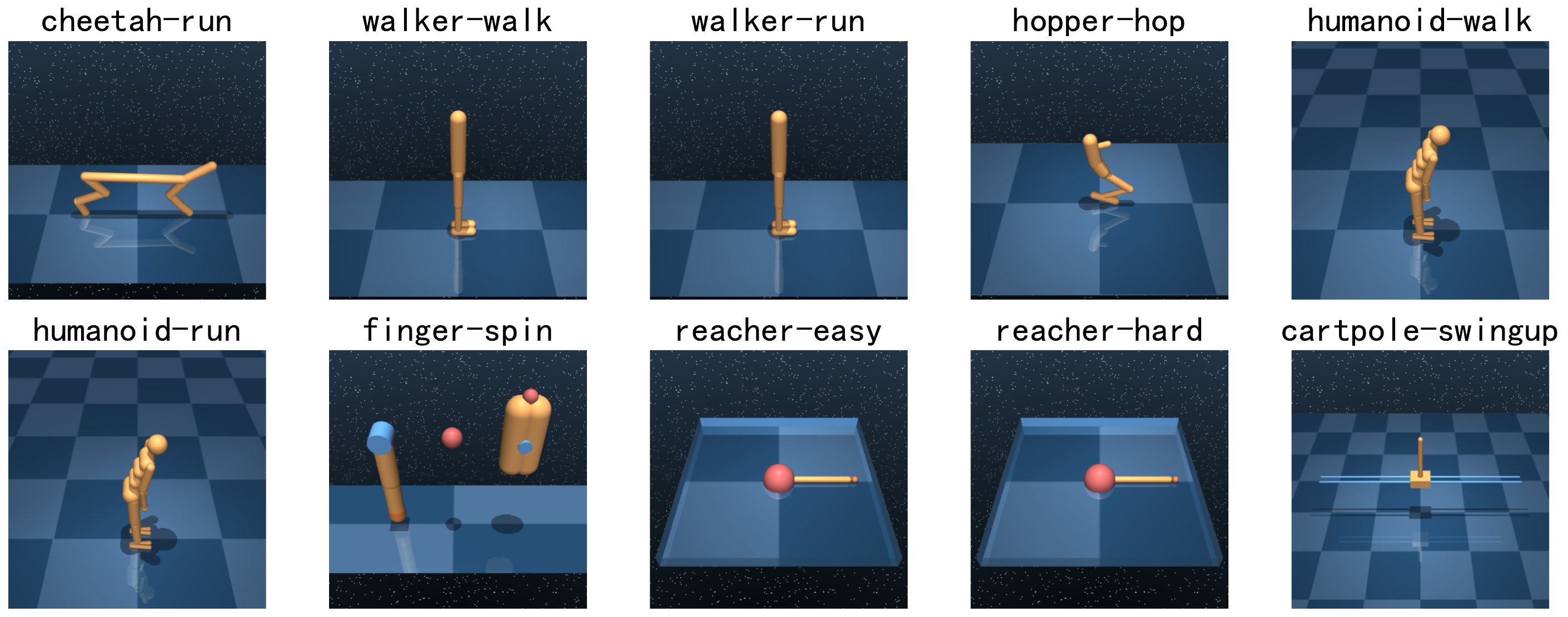}
    \caption{\textbf{Visualizations of the continuous control benchmarks.} We evaluate our method on 10 diverse tasks from the DeepMind Control Suite, ranging from simple balancing tasks (e.g., Cartpole) to complex locomotion tasks (e.g., Humanoid, Cheetah).}
    \label{fig:dmc_scenes}
\end{figure}

The full performance comparisons on these robotic tasks are reported in Figure~\ref{fig:robot_results}. These results demonstrate that R$^2$VPO significantly outperforms PPO across the majority of tasks. In particular, R$^2$VPO exhibits superior sample efficiency and asymptotic performance in complex environments such as \texttt{WalkerRun} and \texttt{CheetahRun}, while maintaining robust learning in sparse-reward settings like \texttt{CartpoleSwingupSparse}.
\begin{figure}[h]
    \centering
    \includegraphics[width=\linewidth]{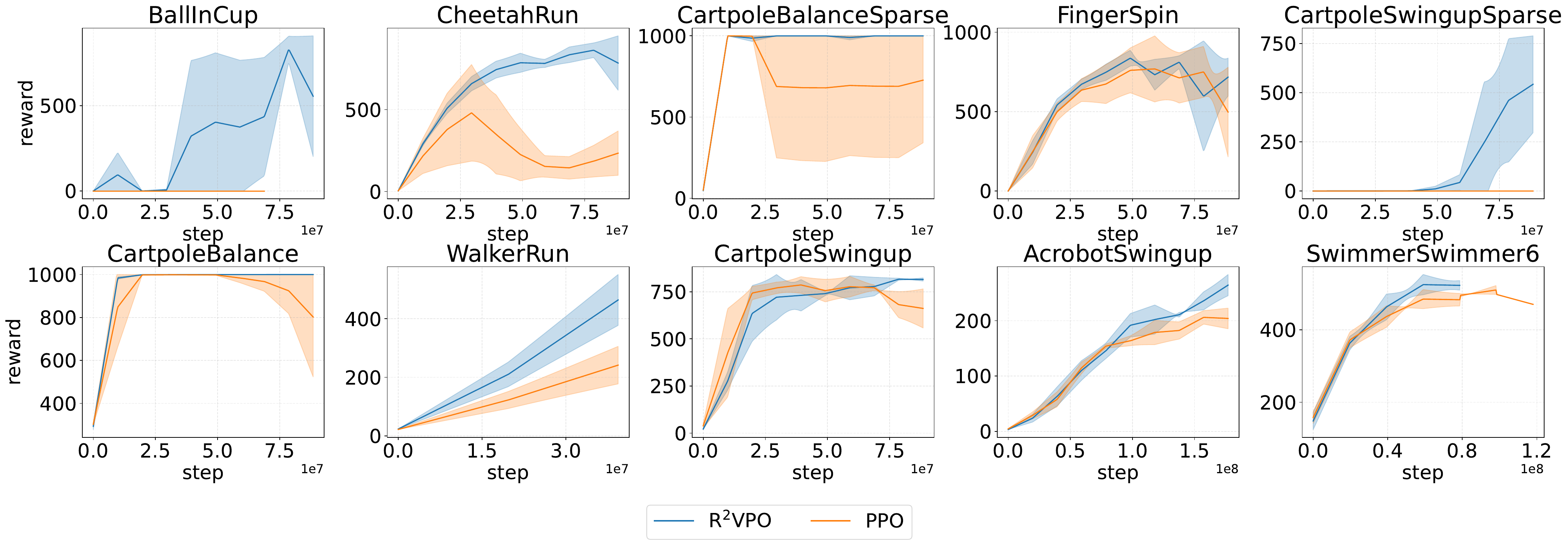}
    \caption{\textbf{Learning curves on DeepMind Control Suite tasks.} The x-axis denotes the number of environment steps, and the y-axis represents the average episode reward. Solid lines denote the mean performance over $5$ independent training runs with different random seeds, and shaded regions denote standard deviation. R$^2$VPO (ours) consistently achieves higher returns and faster convergence compared to PPO.}
    \label{fig:robot_results}
\end{figure}

\paragraph{High-Dimensional Control with Stabilized Actor-Critic Training.}
We further evaluate Humanoid-Run and Dog-Run in Figure~\ref{fig:doghuman_4algos_agg}, two more challenging high-dimensional continuous-control tasks. In our initial implementation, both vanilla PPO and vanilla R$^2$VPO fail to train reliably, suggesting that these tasks require additional actor-critic stabilization beyond the lightweight setup used in the main experiments. We therefore adopt a WPO~\citep{pfau2025wasserstein}-inspired stabilization setup. Under this stronger configuration, both methods become trainable, and R$^2$VPO continues to outperform the corresponding PPO baseline. We note that Dog-Run remains sensitive to task-specific hyperparameters: the later-stage performance drop is likely caused by a conservative configuration that limits exploration after early improvement. A full environment-specific sweep may further improve both methods.

\begin{figure}[!h]
    \centering
    \includegraphics[width=\linewidth]{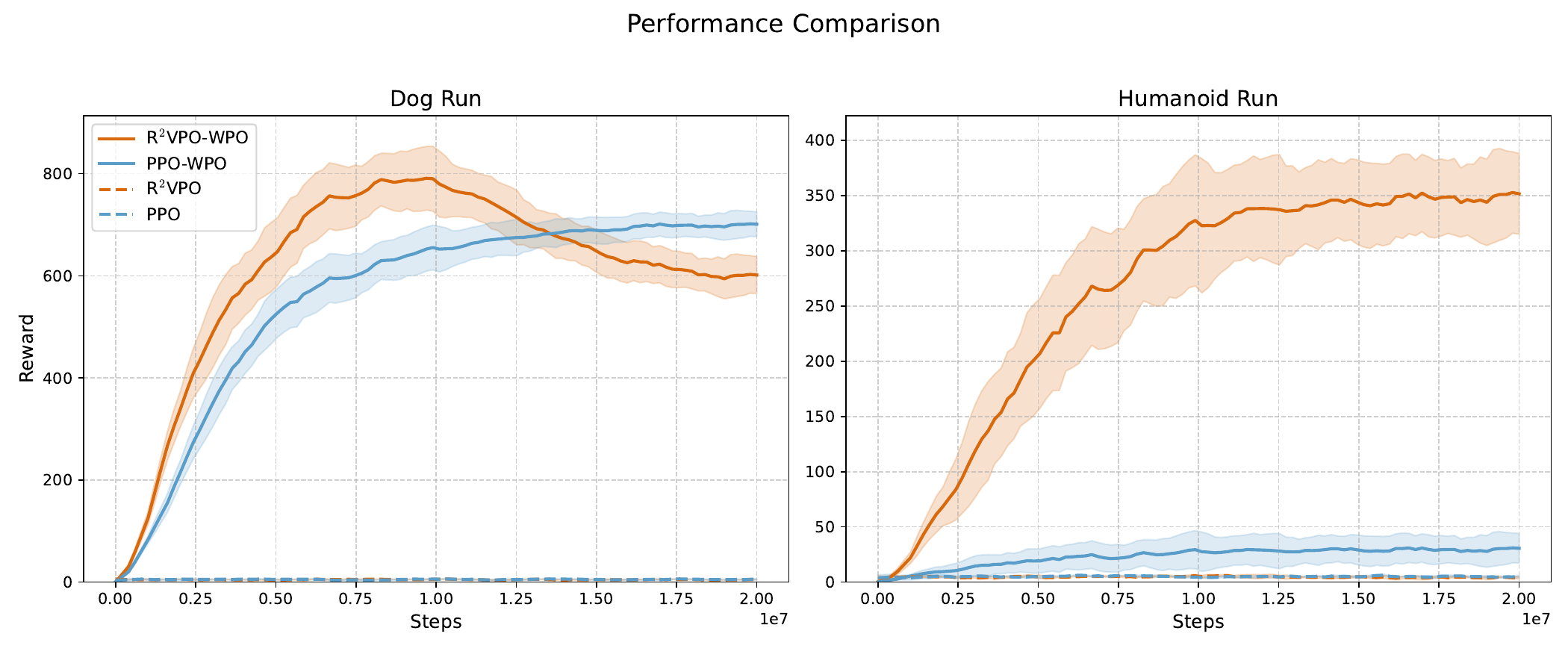}
    \caption{
\textbf{High-Dimensional Control with Stabilized Training.}
We compare R$^2$VPO and PPO on Dog-Run and Humanoid-Run under a WPO-inspired stabilization setup.
R$^2$VPO achieves stronger performance than PPO on both tasks, while the later-stage decrease on Dog-Run suggests that this environment is sensitive to task-specific stabilization and regularization.
}
    \label{fig:doghuman_4algos_agg}
\end{figure}

\section{Computational Infrastructure}
\label{app:infrastructure}

All LLM fine-tuning experiments were conducted on a high-performance computing cluster consisting of \textbf{Huawei Ascend Atlas A3 nodes}. 

To assess the computational overhead of our method, we report the average wall-clock time per training iteration across different model scales in Table~\ref{tab:runtime_analysis}. For the off-policy variant (\textbf{R$^2$VPO-OFF}), the inclusion of a replay buffer and a higher Update-to-Data (UTD) ratio of $2$ results in slightly longer iteration times: approximately 300--400 seconds for 1B models, $\approx$500 seconds for 4B models, and 600--700 seconds for 7B/8B models. 

In contrast, the on-policy variant (\textbf{R$^2$VPO-ON}) requires approximately 50--100 seconds less per iteration than its off-policy counterpart. Crucially, the runtime of R$^2$VPO-ON is strictly comparable to other baseline methods (e.g., GRPO), confirming that the calculation of the variance penalty introduces negligible computational overhead.

\begin{table}[h]
    \centering
    \caption{\textbf{Average Training Runtime per Training Step.} Comparison of wall-clock times across different model scales on 2 Huawei Ascend Atlas A3 nodes. R$^2$VPO-ON maintains parity with standard baselines, while R$^2$VPO-OFF incurs a modest increase due to multiple updates per data batch (UTD=2).}
    \label{tab:runtime_analysis}
    \vspace{2mm}
    \begin{tabular}{l|c|c}
        \toprule
        \textbf{Model Scale} & \textbf{Method} & \textbf{Time per Iteration (s)} \\
        \midrule
        \multirow{2}{*}{1B Scale (e.g., Pangu-1B, Qwen3-1.7B)} 
        & R$^2$VPO-ON & $\approx$ 250 -- 300 \\
        & R$^2$VPO-OFF & 300 -- 400 \\
        \midrule
        \multirow{2}{*}{4B Scale (e.g., Qwen-4B)} 
        & R$^2$VPO-ON & $\approx$ 400 -- 450 \\
        & R$^2$VPO-OFF & $\approx$ 500 \\
        \midrule
        \multirow{2}{*}{7B/8B Scale (e.g., Pangu-7B, Qwen-8B)} 
        & R$^2$VPO-ON & $\approx$ 550 -- 600 \\
        & R$^2$VPO-OFF & 600 -- 700 \\
        \bottomrule
    \end{tabular}
\end{table}

\end{document}